\documentclass{article}

\usepackage{amsmath}
\usepackage[ruled,linesnumbered]{algorithm2e}
\usepackage{amsthm}
\newtheorem{definition}{Definition}
\newtheorem{assumption}{Assumption}
\newtheorem{theorem}{Theorem}
\newtheorem{lemma}{Lemma}

\usepackage{array}
\usepackage{makecell}
\usepackage{multirow}
\usepackage{color}

\usepackage{PRIMEarxiv}

\usepackage[utf8]{inputenc} 
\usepackage[T1]{fontenc}    
\usepackage{hyperref}       
\usepackage{url}            
\usepackage{booktabs}       
\usepackage{amsfonts}       
\usepackage{nicefrac}       
\usepackage{microtype}      
\usepackage{lipsum}
\usepackage{fancyhdr}       
\usepackage{graphicx}       
\graphicspath{{media/}}     
\title{Learning universal approximations for partial differential equations with Physics-Informed Broad Learning System
}

\author{
    \parbox{\linewidth}{\centering
        Zhiwen Yu\textsuperscript{1,2} \quad
        Derong Yang\textsuperscript{1} \quad
        Liujian Zhang\textsuperscript{*3,2} \quad
        Kaixiang Yang\textsuperscript{1} \\[0.2cm]
        Peilin Zhan\textsuperscript{4} \quad
        Jianmin Lv\textsuperscript{1} \quad
        Jane You\textsuperscript{5} \quad
        C. L. Philip Chen\textsuperscript{1}
    } \\
    \vspace{0.4cm} 
    \parbox{\linewidth}{\centering \small 
        \textsuperscript{1}School of Computer Science and Engineering, South China University of Technology, Guangzhou 510006, China \\
        \textsuperscript{2}Peng Cheng Laboratory, Shenzhen 518000, China \\
        \textsuperscript{3}School of Future Technology, South China University of Technology, Guangzhou 510006, China \\
        \textsuperscript{4}School of Computer Science and Technology, Guangdong University of Technology, Guangzhou 510006, China \\
        \textsuperscript{5}Department of Industrial and Systems Engineering, The Hong Kong Polytechnic University
    }
}

\begin{document}
\maketitle
\maketitle

\begingroup
\renewcommand{\thefootnote}{*}
\footnotetext{\textbf{Corresponding author}: \texttt{202210192021@mail.scut.edu.cn} (Liujian Zhang) \\
}
\setcounter{footnote}{0} 
\endgroup

\begin{abstract}
Partial differential equations (PDEs) play a central role in modeling complex physical, biological, and engineering systems. While traditional numerical solvers are robust, they often incur prohibitive computational costs due to mesh dependencies, whereas recent Physics-Informed Neural Networks (PINNs) offer a mesh-free alternative but frequently suffer from slow convergence and optimization instability. To bridge this gap, this article proposes the Physics-Informed Broad Learning System (PIBLS), a novel backpropagation-free framework that reformulates PDE solving as a direct least-squares optimization. We improved an algorithm within this framework to handle nonlinear PDEs efficiently and provide a rigorous mathematical proof establishing the universal approximation property of PIBLS for these equations. Experiments on linear and nonlinear PDEs demonstrate that PIBLS is one to three orders of magnitude faster than conventional PINNs while achieving significantly higher solution accuracy. This framework provides a computationally efficient paradigm for scientific machine learning, offering a practical, high-speed alternative for real-time simulation and design optimization tasks.

\end{abstract}

\section{Introduction}
Partial differential equations (PDEs) are fundamental for modeling physical phenomena across various scientific and engineering fields. The primary challenge in their application lies in finding the solution, which is typically an unknown function representing a physical field, that satisfies the governing equation throughout a continuous domain and its associated boundary and initial conditions. Traditional numerical methods, such as the Finite Element Method (FEM)~\cite{FEM}, Finite Volume Method (FVM)~\cite{FVM}, and Finite Difference Method (FDM)~\cite{FDM}, have long been the standard for solving these equations. However, these methods rely heavily on the quality and density of spatial discretization, requiring very fine meshes to achieve high accuracy. For complex problems, this results in large computational costs and a significant burden on system resources~\cite{mesh_drawback1,mesh_drawback2,mesh_drawback3}.

In response to these limitations, deep learning-based solvers like Physics-Informed Neural Networks (PINNs) have emerged as a promising alternative~\cite{PINN}. By embedding governing physical laws directly into the loss function, PINNs can handle complex problems without requiring manual mesh generation~\cite{PIML_review1,PIML_review2,PIML_review3,PINN_highdimension,zhang, nazari, sarker, xing}. However, the practical use of PINNs is hindered by their reliance on deep architectures and slow gradient-based optimization techniques~\cite{adam,LBFGS,backpropagation}. This training process is notoriously complex, sensitive to hyperparameter tuning, and can suffer from pathological loss landscapes or vanishing gradients, which may prevent it from finding an accurate solution~\cite{PINN_drawback,PINN_drawback2,krishnapriyan2021characterizing}, creating a major bottleneck for their widespread adoption. To this end, a broader class of randomized neural network (RaNN) methods, represented by Physics-Informed Extreme Learning Machines (PIELM)~\cite{PIELM}, has been developed to bypass backpropagation and achieve significant computational speedups through closed-form solvers based on random feature mappings~\cite{pseudo_inverse}. Despite being exceptionally fast, these frameworks often lack the necessary representational power to accurately model complex solutions~\cite{ELM_drawback}.

Broad Learning System (BLS) is a novel neural network architecture introduced by Chen et al.~\cite{BLS}. Unlike traditional deep learning, which enhances performance by stacking multiple layers, BLS improves learning capacity through a broad expansion of feature and enhancement nodes within a shallow structure. Inspired by the random vector functional link neural network (RVFLNN)~\cite{rvflnn, rvflnn2}, only the output weights are trained using the Moore–Penrose pseudoinverse~\cite{Moore_Penrose_pseudoinverse}, which allows for fast, backpropagation-free learning. Theoretical analyses~\cite{BLS_universal_approximation} have established that BLS possesses universal approximation properties, granting it expressive power comparable to that of deep neural networks while achieving significantly faster training efficiency~\cite{BLS_speed,BLS_review,BLS_review2}. With its rapid training, strong performance, flexible and robust parameter selection, and backpropagation-free single-step computation, BLS has been widely applied and continuously improved across many research fields~\cite{zhang2025bls,yu2020bls,chen2025bls, zhong2025bls,yun2024bls,BLS_Incremental,BLS_Diagnosis}, demonstrating great potential for modeling complex dynamic systems.

In this paper, we propose the PIBLS, a novel framework that maintains exceptional speed while achieving a new level of precision that surpasses existing PDE solvers. The main contributions of this work are summarized as follows:

\begin{figure*}[!t]
    \centering
    \includegraphics[width=0.95\textwidth]{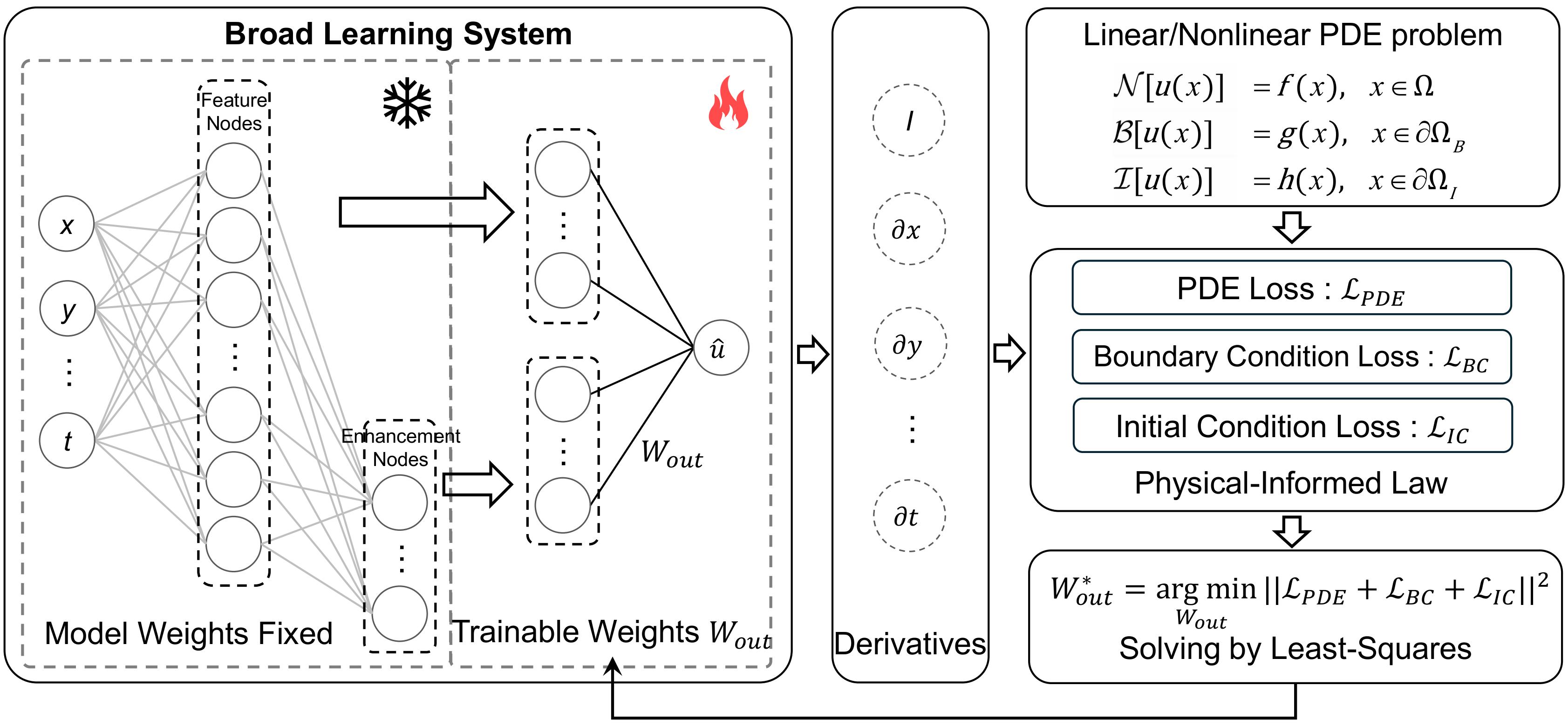}
    \caption{\textbf{Framework of the proposed Physics-Informed Broad Learning System (PIBLS).} The input coordinates are projected into the Broad Learning System (BLS). The BLS consists of randomly generated, fixed Feature Nodes and Enhancement Nodes, which form a basis. In this diagram, light grey lines represent the fixed weights associated with these nodes. The network output $\hat{u}$ is a linear combination of this basis, computed via the trainable output weights $\mathbf{W}_\text{out}$. Analytical derivatives of $\hat{u}$ are computed and used to formulate the residuals of the governing PDE ($\mathcal{N}[u(\boldsymbol{x})] = f(\boldsymbol{x})$), boundary conditions ($\mathcal{B}[u(\boldsymbol{x})] = g(\boldsymbol{x})$), and initial conditions ($\mathcal{I}[u(\boldsymbol{x})] = s(\boldsymbol{x})$). These residuals define the Physics-Informed Losses $\mathcal{L}_\text{PDE}$, $\mathcal{L}_\text{BC}$, and $\mathcal{L}_\text{IC}$. The optimal weights $\mathbf{W}_\text{out}^{\ast}$ are obtained by minimizing the total squared loss $\mathcal{L}_\text{PDE} + \mathcal{L}_\text{BC} + \mathcal{L}_\text{IC}$ via a least-squares method.}
    \label{fig:pibls_framework}
\end{figure*}
\begin{itemize}
    \item To the best of our knowledge, PIBLS is the first application of the BLS to solving PDEs, providing a new paradigm for efficient, backpropagation-free scientific computing. By leveraging mapped feature and enhancement nodes as a robust basis, this design offers a architecture capable of modeling complex physical fields.

    \item A unique solver strategy is developed to address the computational challenges of governing equations. We reformulate the problem as a direct least-squares optimization. For linear equations, an analytical solution is derived for rapid computation; for nonlinear systems, an enhanced nonlinear least-squares perturbation algorithm is introduced to ensure stable convergence.

    \item We provide a rigorous mathematical proof of the universal approximation property of PIBLS, theoretically guaranteeing its capability to approximate PDE solutions.
    
    \item Extensive experiments demonstrate that PIBLS is one to three orders of magnitude faster than conventional PINNs while achieving significantly higher solution accuracy, establishing it as a computationally efficient alternative to traditional numerical methods.
\end{itemize}

\section{Methodology}
This section details mathematical formulation of the PIBLS. We first define the network architecture and derive its closed-form analytical derivatives. Subsequently, we formulate the PDE solving task as a least-squares optimization problem, detailing specific strategies for linear and nonlinear systems. Finally, we provide a mathematical proof establishing the universal approximation capability of the proposed framework.

\subsection{The PIBLS structure}

Fig.~\ref{fig:pibls_framework} illustrates the overall framework of PIBLS, which is built upon the BLS architecture~\cite{BLS}. Unlike deep networks that learn features hierarchically~\cite{deep_vs_shallow_PINN1,deep_vs_shallow_PINN2}, PIBLS constructs an expressive feature representation by expanding the network's width rather than its depth.

Specifically, the architecture establishes a mapping from the input domain $\boldsymbol{x} \in \mathbb{R}^D$ to the solution space using two sets of basis functions: feature nodes and enhancement nodes.

First, the input $\boldsymbol{x}$ is projected into $n$ groups of feature mappings, where each group consists of $k$ nodes. The $i$-th group of feature nodes $z_i(\boldsymbol{x})$ is generated via a nonlinear transformation:
\begin{equation}
    \label{eq:feature_nodes}
    z_i(\boldsymbol{x}) = \phi(\boldsymbol{x} \mathbf{W}_{e_i} + \boldsymbol{\beta}_{e_i}), \quad i = 1, \dots, n
\end{equation}
where $\mathbf{W}_{e_i} \in \mathbb{R}^{D \times k}$ and $\boldsymbol{\beta}_{e_i} \in \mathbb{R}^k$ are weights and biases corresponding to the $i$-th group. These parameters are initialized from a uniform distribution and remain fixed. In this work, the activation function $\phi(\cdot)$ is the hyperbolic tangent ($\tanh$). Subsequently, all $n$ feature groups are concatenated to form the complete feature layer representation $\mathbf{Z} \in \mathbb{R}^{nk}$:
\begin{equation}
    \label{eq:feature_groups}
    \mathbf{Z} \equiv [z_1(\boldsymbol{x}), z_2(\boldsymbol{x}), \dots, z_n(\boldsymbol{x})].
\end{equation}

To promote a sparse and compact representation, the feature mapping weights and biases ($\mathbf{W}_{e_i}, \boldsymbol{\beta}_{e_i}$) can optionally be fine-tuned using a sparse autoencoder at this stage.

The feature representation is further enriched by utilizing this aggregated feature set $\mathbf{Z}$ to generate $m$ groups of enhancement nodes, where each group consists of $q$ nodes. The $j$-th group of enhancement nodes $h_j$ is generated similarly:
\begin{equation}
    \label{eq:enhancement_nodes}
    h_j(\boldsymbol{x}) = \xi(\mathbf{Z} \mathbf{W}_{h_j} + \boldsymbol{\beta}_{h_j}), \quad j = 1, \dots, m
\end{equation}
where $\mathbf{W}_{h_j} \in \mathbb{R}^{nk \times q}$ and $\boldsymbol{\beta}_{h_j} \in \mathbb{R}^q$ are fixed random weights and biases connecting the feature nodes to the enhancement nodes. Similarly, the tanh function is adopted for $\xi(\cdot)$. The full set of enhancement nodes forms the concatenated representation $\mathbf{H} \in \mathbb{R}^{mq}$:
\begin{equation}
    \label{eq:enhancement_groups}
    \mathbf{H} \equiv [h_1(\boldsymbol{x}), h_2(\boldsymbol{x}), \dots, h_m(\boldsymbol{x})].
\end{equation}

The basis matrix is formed by concatenating all feature and enhancement node groups:
\begin{equation}
\label{eq:final_basis}
    \mathbf{A} = [\mathbf{Z} \mid \mathbf{H}].
\end{equation}

The network's output $\hat{u}(\boldsymbol{x})$ serves as the surrogate solution to the PDE. It is computed as a linear combination of the final basis via the trainable output weights $\mathbf{W}_\text{out}$:
\begin{equation}
\label{eq:output}
    \hat{u}(\boldsymbol{x}) = \mathbf{A} \mathbf{W}_\text{out}.
\end{equation}

A key advantage of this architecture is that only the output weights $\mathbf{W}_\text{out}$ are trained. All internal network parameters remain fixed after the initialization phase. Accordingly, the total number of nodes $N_{\text{nodes}}$ is defined as the sum of the feature and enhancement nodes, given by $N_{\text{nodes}} = nk + mq$.

\subsection{Physics-informed least-squares formulation}
To ensure the network output satisfies the governing PDE, we embed the physical laws into the solution process. We consider a general, time-dependent differential equation governed by three sets of constraints:
\begin{equation}
    \begin{aligned}
        \mathcal{N}[u(\boldsymbol{x})] &= f(\boldsymbol{x}), \quad \boldsymbol{x} \in \Omega, \\
        \mathcal{B}[u(\boldsymbol{x})] &= g(\boldsymbol{x}), \quad \boldsymbol{x} \in \partial \Omega_B, \\
        \mathcal{I}[u(\boldsymbol{x})] &= s(\boldsymbol{x}), \quad \boldsymbol{x} \in \partial \Omega_I,
    \end{aligned}
\end{equation}
where $\mathcal{N}[\cdot]$, $\mathcal{B}[\cdot]$, and $\mathcal{I}[\cdot]$ are the differential, boundary, and initial condition operators, respectively. The terms $f(\boldsymbol{x})$, $g(\boldsymbol{x})$, and $s(\boldsymbol{x})$ correspond to the source term, boundary conditions, and initial conditions. The domain $\Omega$ represents the spatio-temporal interior, $\partial \Omega_B$ the spatial boundary, and $\partial \Omega_I$ the initial time boundary.

The central principle of the PIBLS framework is to find the single optimal output weight matrix $\mathbf{W}_\text{out}$ that minimizes the total residual across the problem's domain. To achieve this, we discretize the training points for the interior $\mathcal{X}_f = \{\boldsymbol{x}_i\}_{i=1}^{N_f} \subset \Omega$, the spatial boundary points $\mathcal{X}_\text{bc} = \{\boldsymbol{x}_j\}_{j=1}^{N_\text{bc}} \subset \partial \Omega_B$, and the initial time boundary points $\mathcal{X}_\text{ic} = \{\boldsymbol{x}_k\}_{k=1}^{N_\text{ic}} \subset \partial \Omega_I$. For any given point in these sets, the residuals are defined as:
\begin{equation}
    \label{eq:residuals}
    \begin{aligned}
        r_{\mathcal{N}}(\boldsymbol{x}_i) &= \mathcal{N}[\hat{u}(\boldsymbol{x}_i)] - f(\boldsymbol{x}_i), \quad i = 1, \dots, N_f, \\
        r_{\mathcal{B}}(\boldsymbol{x}_j) &= \mathcal{B}[\hat{u}(\boldsymbol{x}_j)] - g(\boldsymbol{x}_j), \quad j = 1, \dots, N_\text{bc}, \\
        r_{\mathcal{I}}(\boldsymbol{x}_k) &= \mathcal{I}[\hat{u}(\boldsymbol{x}_k)] - s(\boldsymbol{x}_k), \quad k = 1, \dots, N_\text{ic}.
    \end{aligned}
\end{equation}

For steady-state (time-independent) problems, the initial condition $r_{\mathcal{I}}$ and $s(\boldsymbol{x})$ do not exist.

\begin{algorithm}[!t]
\caption{PIBLS Linear Least Squares Solver}
\label{alg:linear_solver}
\SetAlgoLined
\DontPrintSemicolon
\SetKwComment{tcp}{}{}%
\SetKwInOut{Input}{Input}\SetKwInOut{Output}{Output}
\Input{
    Point sets $\mathcal{X} = \{ \mathcal{X}_f, \mathcal{X}_\text{bc}, \mathcal{X}_\text{ic} \}$, number of feature groups $n$, number of enhancement groups $m$, initialization range $R_m$
}
\Output{
    The optimal output weights $\mathbf{W}_\text{out}$
}
\For{$i \leftarrow 1$ \KwTo $n$}{
    $\mathbf{W}_{e_i}, \boldsymbol{\beta}_{e_i} \leftarrow \text{Random}(-R_m, R_m)$ \;
    Calculate $z_i(\mathcal{X})$ using~\eqref{eq:feature_nodes} \;
}
Get $\mathbf{Z}$ using~\eqref{eq:feature_groups} \;

\For{$j \leftarrow 1$ \KwTo $m$}{
    $\mathbf{W}_{h_j}, \boldsymbol{\beta}_{h_j} \leftarrow \text{Random}(-R_m, R_m)$ \;
    Calculate $h_j(\mathcal{X})$ using~\eqref{eq:enhancement_nodes} \;
}
Get $\mathbf{H}$ using~\eqref{eq:enhancement_groups} \;

Assemble $\mathbf{A}_\text{sys}, \mathbf{T}$ using~\eqref{eq:assembly}

$\mathbf{W}_\text{out} \leftarrow \text{SolveLeastSquares}(\mathbf{A}_\text{sys} \mathbf{W} = \mathbf{T})$ \;

\Return $\mathbf{W}_\text{out}$
\end{algorithm}

\subsection{Analytical derivatives of the network}
Instead of relying on automatic differentiation, the PIBLS framework evaluates the residuals by directly computing derivatives. As defined in~\eqref{eq:output}, the network output is a linear combination of basis functions in $\mathbf{A}$. Let $\boldsymbol{x} = [x_1, \dots, x_D]$ denote the input coordinates. Since the output weights $\mathbf{W}_\text{out}$ are independent of the input coordinates, the $p$-th order partial derivative with respect to the $d$-th coordinate $x_d$ is:
\begin{equation}
    \frac{\partial^p \hat{u}}{\partial x_d^p} = \left( \frac{\partial^p \mathbf{A}}{\partial x_d^p} \right) \mathbf{W}_\text{out}.
\end{equation}

Consequently, the problem reduces to differentiating the basis matrix $\mathbf{A}$. Based on~\eqref{eq:final_basis}, the derivative is formed by concatenating the derivatives of the feature and enhancement blocks:
\begin{equation}
    \frac{\partial^p \mathbf{A} }{\partial x_d^p} = \left[ \frac{\partial^p \mathbf{Z}}{\partial x_d^p} \mid \frac{\partial^p \mathbf{H}}{\partial x_d^p} \right].
\end{equation}

We first derive the analytical derivatives for each individual feature node group $z_i$ to form $\mathbf{Z}$, and subsequently determine the derivatives for each enhancement node group $h_j$ to form $\mathbf{H}$.

Recalling that the definition of feature nodes in~\eqref{eq:feature_nodes}, the partial derivative of this linear argument with respect to $x_d$ corresponds to the $d$-th row of the weight matrix $\mathbf{W}_{e_i}$. By applying the chain rule, the $p$-th order derivative of the feature node output is:
\begin{equation}
    \frac{\partial^p z_i}{\partial x_d^p} = \phi^{(p)}(\boldsymbol{x} \mathbf{W}_{e_i} + \boldsymbol{\beta}_{e_i}) \odot \left( [(\mathbf{W}_{e_i})_{d, \cdot}]^{\odot p} \right), \quad i = 1, \dots, n
\end{equation}
where $\phi^{(p)}$ is the $p$-th derivative of the activation function, $\odot$ denotes the Hadamard product, and $(\cdot)^{\odot p}$ denotes the element-wise power. 
The derivative of the complete feature matrix $\mathbf{Z}$ is then obtained by concatenating these group derivatives: 
\begin{equation}
    \frac{\partial^p \mathbf{Z}}{\partial x_d^p} = \left[ \frac{\partial^p z_1}{\partial x_d^p}, \frac{\partial^p z_2}{\partial x_d^p}, \dots, \frac{\partial^p z_n}{\partial x_d^p} \right].
\end{equation}

The derivatives of the enhancement nodes are computed similarly via nested chain rules. To simplify the notation, we define the linear pre-activation of the $j$-th enhancement group as $\tilde{h}_j$:
\begin{equation}
    \tilde{h}_j = \mathbf{Z}\mathbf{W}_{h_j} + \boldsymbol{\beta}_{h_j}, \quad j = 1, \dots, m
\end{equation}
The output of the $j$-th group is then $h_j(\boldsymbol{x}) = \xi(\tilde{h}_j(\boldsymbol{x}))$. Applying the chain rule, the first-order derivative with respect to $x_d$ is:
\begin{equation}
    \frac{\partial h_j}{\partial x_d} 
    = \xi'(\tilde{h}_j) \odot \frac{\partial \tilde{h}_j}{\partial x_d}
    = \xi'(\tilde{h}_j) \odot \left( \frac{\partial \mathbf{Z}}{\partial x_d} \mathbf{W}_{h_j} \right).
\end{equation}

Higher-order derivatives follow the standard product rule. Consequently, the second-order derivative is compactly expressed as:
\begin{equation}
    \frac{\partial^2 h_j}{\partial x_d^2} = 
    \xi''(\tilde{h}_j) \odot \left( \frac{\partial \mathbf{Z}}{\partial x_d} \mathbf{W}_{h_j} \right)^{\odot 2} 
    + 
    \xi'(\tilde{h}_j) \odot \left( \frac{\partial^2 \mathbf{Z}}{\partial x_d^2} \mathbf{W}_{h_j} \right).
\end{equation}

Since $\phi$ and $\xi$ are smooth functions, all derivatives can be computed exactly. These analytical expressions allow PIBLS to efficiently compute the exact matrices for the differential operators required for the least-squares formulation, making the framework both fast and accurate.

\subsection{Solution of linear PDEs}
For linear PDEs, the solution process simplifies fundamentally. Since the network output in~\eqref{eq:output} is linear with respect to $\mathbf{W}_\text{out}$, applying any linear operator $\mathcal{O} \in \{\mathcal{N}, \mathcal{B}, \mathcal{I}\}$ decouples the weights from the basis matrix $\mathbf{A}$:
\begin{equation}
\label{eq:linear_op}
    \mathcal{O}[\hat{u}(\boldsymbol{x})] = \mathcal{O}[\mathbf{A} \mathbf{W}_\text{out}] = \mathcal{O}[\mathbf{A}] \mathbf{W}_\text{out}.
\end{equation}

To determine the weights, we substitute~\eqref{eq:linear_op} into the residual definitions in~\eqref{eq:residuals} and enforce the condition that they vanish at the training points. This allows us to assemble the constraints directly into a unified linear system:
\begin{equation}
    \begin{aligned}
        (\mathcal{N}[\mathbf{A}_f]) \mathbf{W}_\text{out} &= f(\mathcal{X}_f), \\
        (\mathcal{B}[\mathbf{A}_\text{bc}]) \mathbf{W}_\text{out} &= g(\mathcal{X}_\text{bc}), \\
        (\mathcal{I}[\mathbf{A}_\text{ic}]) \mathbf{W}_\text{out} &= s(\mathcal{X}_\text{ic}).
    \end{aligned}
\end{equation}

Here, the terms $\mathcal{N}[\mathbf{A}_f]$, $\mathcal{B}[\mathbf{A}_\text{bc}]$, and $\mathcal{I}[\mathbf{A}_\text{ic}]$ represent the matrices resulting from applying the differential, boundary, and initial condition operators to the basis functions and evaluating them at their respective point sets. Correspondingly, $f(\mathcal{X}_f)$, $g(\mathcal{X}_\text{bc})$, and $s(\mathcal{X}_\text{ic})$ represent the target matrix derived from the source term, boundary conditions, and initial conditions. Note that for steady-state problems, the initial condition do not exist.

We construct a single linear system by vertically stacking the matrix equations evaluated on the respective sets~\cite{least_squares}:
\begin{equation}
    \mathbf{A}_\text{sys} \mathbf{W}_\text{out} = \mathbf{T}
\end{equation}
where the system matrix $\mathbf{A}_\text{sys}$ and the target matrix $\mathbf{T}$ are composed of:
\begin{equation}
    \label{eq:assembly}
    \mathbf{A}_\text{sys} = 
    \begin{bmatrix} 
        \mathcal{N}[\mathbf{A}_f] \\ 
        \mathcal{B}[\mathbf{A}_\text{bc}] \\ 
        \mathcal{I}[\mathbf{A}_\text{ic}] 
    \end{bmatrix}, 
    \quad
    \mathbf{T} = 
    \begin{bmatrix} 
        f(\mathcal{X}_f) \\ g(\mathcal{X}_\text{bc}) \\ s(\mathcal{X}_\text{ic}) 
    \end{bmatrix}.
\end{equation}

This system is solved directly for the optimal weights $\mathbf{W}_\text{out}$ as detailed in Algorithm~\ref{alg:linear_solver}, using a standard linear least-squares method~\cite{pseudo_inverse}, thereby avoiding iterative gradient-based training entirely.

\begin{algorithm}[!t]
\caption{PIBLS Enhanced NLSQ-perturb Solver}
\label{alg:nlsq_perturb}
\SetAlgoLined
\DontPrintSemicolon
\SetKwComment{tcp}{}{}%
\SetKwInOut{Input}{Input}\SetKwInOut{Output}{Output}
\Input{
    Basis system matrix $\mathbf{A}_\text{sys}$, linearized target $\mathbf{T}_\text{lin}$, residual function $\boldsymbol{r}(\mathbf{W})$, Jacobian function $J(\mathbf{W})$, perturbation loops $k_\text{max}$, perturbation size $\delta$, tolerance $\mathcal{L}_\text{tol}$
}
\Output{
    The best solution $\mathbf{W}_\text{out}$
}

$\mathbf{W}_0 \leftarrow \text{SolveLeastSquares}(\mathbf{A}_{\text{sys\_lin}} \mathbf{W} = \mathbf{T}_\text{lin})$

$(\mathbf{W}_\text{out}, \mathcal{L}_\text{best}) \leftarrow \text{SolveNonlinear}(\boldsymbol{r}, J, \mathbf{W}_0)$

\If{$\mathcal{L}_\text{best} < \mathcal{L}_\text{tol}$}{
    \Return $\mathbf{W}_\text{out}$
}

\For{$k \leftarrow 1$ \KwTo $k_\text{max}$}{
    $\Delta \mathbf{W} \leftarrow \text{Random}(-\delta, \delta)$ \;

    $\mathbf{W}_{\text{guess}} \leftarrow \mathbf{W}_\text{out} + \Delta \mathbf{W}$

    $(\mathbf{W}_\text{new}, \mathcal{L}_\text{new}) \leftarrow \text{SolveNonlinear}(\boldsymbol{r}, J, \mathbf{W}_{\text{guess}})$

    \If{$\mathcal{L}_\text{new} < \mathcal{L}_\text{best}$ }
    { 
        $\mathbf{W}_\text{out} \leftarrow \mathbf{W}_\text{new}$ \;
        $\mathcal{L}_\text{best} \leftarrow \mathcal{L}_\text{new}$ \;
    }
    \If{$\mathcal{L}_\text{best} < \mathcal{L}_\text{tol}$}{
        \Return $\mathbf{W}_\text{out}$ 
    }
}

\Return $\mathbf{W}_\text{out}$
\end{algorithm}

\subsection{Solution of nonlinear PDEs}
To solve the nonlinear PDEs, we aim to find the weights $\mathbf{W}_\text{out}$ that drive the residuals defined in~\eqref{eq:residuals} to zero. Since the governing operators are nonlinear, a direct linear solution is not feasible. Instead, we formulate a NLSQ optimization task by directly aggregating the squared norms of these residual terms. The objective function is thus constructed as:
\begin{equation}
    \mathop{\mathrm{arg\,min}}\limits_{\mathbf{W}_\text{out}}
    \, \mathcal{L}(\mathbf{W}_\text{out}) = \|\boldsymbol{r}_{\mathcal{N}}(\mathcal{X}_f)\|_2^2 + \|\boldsymbol{r}_{\mathcal{B}}(\mathcal{X}_\text{bc})\|_2^2 + \|\boldsymbol{r}_{\mathcal{I}}(\mathcal{X}_\text{ic})\|_2^2,
\end{equation}
where the initial condition term $\boldsymbol{r}_{\mathcal{I}}(\mathcal{X}_\text{ic})$ is included only for time-dependent problems.

To solve this NLSQ problem, we employ the Trust-Region Reflective (TRF) algorithm~\cite{TRF}, which relies on the Jacobian matrix of the residuals $J(\mathbf{W}_\text{out})$. We compute this Jacobian efficiently by leveraging the exact analytical derivatives derived in the preceding section. Although the governing equation is nonlinear, the network approximation $\hat{u}$ itself remains linear with respect to the weights. Therefore, applying the chain rule yields an efficient analytical form:
\begin{equation}
J(\mathbf{W}_\text{out}) = \frac{\partial \boldsymbol{r}}{\partial \mathbf{W}_\text{out}} = \frac{\partial \boldsymbol{r}}{\partial \hat{u}} \frac{\partial \hat{u}}{\partial \mathbf{W}_\text{out}} = \frac{\partial \boldsymbol{r}}{\partial \hat{u}} \mathbf{A},
\end{equation}
where $\frac{\partial \boldsymbol{r}}{\partial \hat{u}}$ represents the variation of the differential operator with respect to the solution field.

To ensure robust convergence, we employ an Enhanced NLSQ-perturb algorithm (Algorithm~\ref{alg:nlsq_perturb}). Adapted from~\cite{LocalELM}, our approach integrates a deterministic local search with a stochastic global perturbation mechanism. We first utilize a physics-informed initialization by solving a linearized approximation of the PDE ($\mathcal{N}_\text{lin} \approx \mathcal{N}$) to obtain a high-quality initial guess $\mathbf{W}_0$. From this starting point, the TRF method reduces the computational overhead of the initial search phase. If the solution stagnates in a local minimum, a stochastic perturbation loop applies random noise to the weights, allowing the solver to escape and explore the global parameter space.

\section{Universal approximation capability of PIBLS}
A rigorous theoretical foundation is essential to substantiate the capability of PIBLS in solving complex PDEs. 
We study the approximation properties of PIBLS within Sobolev spaces $H^s(\Omega)$ for $s\in\{1,2\}$.
The notation is similar to Theorem 1 of~\cite{BLS}
. For theoretical convenience, we treat the feature layer as a flattened collection of $nk$ nodes parameterized by weight vectors $\boldsymbol{w}_{e_i}$.

\begin{definition}[Sobolev Space Framework]
Let $\Omega\subset\mathbb{R}^D$ be compact with Lipschitz boundary. 
The target solution $u^*$ lies in $H^s(\Omega)$ equipped with the norm
\begin{equation*}
\|u\|_{H^s(\Omega)}
= \left( \sum_{|\alpha|\le s} \|\partial^\alpha u\|_{L^2(\Omega)}^2 \right)^{1/2}.
\end{equation*}
\end{definition}

\begin{definition}[Functional PIBLS Representation]
Under the PIBLS architecture, the approximation $u:\Omega\to\mathbb{R}$ is written as
\begin{equation*}
u_{n,m}(\boldsymbol{x})=u^{\mathbf{Z}}_n(\boldsymbol{x})+u_m^{\mathbf{H}}(\boldsymbol{x}),
\end{equation*}
where
\begin{align*}
u^{\mathbf{Z}}_n(\boldsymbol{x})
 = \sum_{i=1}^{nk} w_i\,\phi\!\left(\boldsymbol{x} \boldsymbol{w}_{e_i} + {\beta}_{e_i}\right), \\
u_m^{\mathbf{H}}(\boldsymbol{x})
= \sum_{j=1}^{mq} w_{nk+j}\,
    \xi\!\left(z \boldsymbol{w}_{h_j} + {\beta}_{h_j}\right),
\end{align*}
and the feature mapping $z:\Omega\to\mathbb{R}^{nk}$ is defined by
\begin{equation*}
z(\boldsymbol{x})=
\big[\phi(\boldsymbol{x} \boldsymbol{w}_{e_1} + {\beta}_{e_1}), \dots ,\phi(\boldsymbol{x} \boldsymbol{w}_{e_{nk}} + {\beta}_{e_{nk}})\big].
\end{equation*}

The output weights $\mathbf{W}_\text{out}=[w_1,\dots,w_{nk+mq}]$ are obtained analytically.
\end{definition}

\begin{assumption}[Activation Regularity]
\label{assump:activation_regular}
The activations $\phi(\cdot)$ and $\xi(\cdot)$ are non-constant, bounded, and $C^\infty$.
\end{assumption}

\begin{assumption}[Parameter Distribution]
\label{assump:parameter_distribution}
All parameters of the feature and enhancement mappings are sampled i.i.d.\ from continuous distributions with compact support (or bounded support).
\end{assumption}

\begin{lemma}[Smooth Embedding of Feature Mapping]
\label{lemma:feature_embedding}
Under Assumptions~\ref{assump:activation_regular}--\ref{assump:parameter_distribution},
the random mapping
\begin{equation*}
\boldsymbol{x} \longmapsto z(\boldsymbol{x})
\end{equation*}
is almost surely a smooth embedding (diffeomorphism onto its image) for sufficiently large $n$.
\end{lemma}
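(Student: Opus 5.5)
The plan is the standard route to embeddings: show that $z$ is an injective immersion on the compact set $\Omega$. A continuous injection from a compact space into a Hausdorff space is automatically a homeomorphism onto its image, so an injective immersion of a compact set is a smooth embedding. Both properties will be obtained probabilistically: for a fixed point (or pair of points) a single random node "works" with probability bounded below, and nodes are i.i.d. This makes the failure probability decay geometrically in $nk$. A covering argument over the compact set then gives uniformity. Finally, a Borel--Cantelli argument over a nested sequence of node sets (nodes appended as $n$ grows) upgrades "with probability $\to 1$" to "almost surely for all sufficiently large $n$".

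\textbf{Step 1 (a single node is informative with positive probability).} Since $\phi$ is non-constant and $C^\infty$ (Assumption~\ref{assump:activation_regular}), there is an open interval $I$ on which $\phi'\neq 0$. I will need a mild nondegeneracy of the parameter law, which I read into Assumption~\ref{assump:parameter_distribution}: the density is positive on an open set of $(\boldsymbol{w},\beta)$ whose pre-activations $\boldsymbol{x}\boldsymbol{w}+\beta$ can reach $I$. Under this, continuity and compactness give constants $p_0>0$ and $\rho>0$ such that the following hold.

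For every $\boldsymbol{x}\in\Omega$ and unit $\boldsymbol{v}\in\mathbb{R}^D$, with probability at least $p_0$ a node satisfies
\begin{equation*}
|\phi'(\boldsymbol{x}\boldsymbol{w}+\beta)\,\boldsymbol{v}\cdot\boldsymbol{w}|\ge\rho .
\end{equation*}
This is obtained by taking $\boldsymbol{w}$ in an open cone around $\boldsymbol{v}$ and adjusting $\beta$ so that $\boldsymbol{x}\boldsymbol{w}+\beta\in I$.

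Similarly, for every pair $\boldsymbol{x},\boldsymbol{y}$ with $|\boldsymbol{x}-\boldsymbol{y}|\ge\varepsilon$, with probability at least $p_0(\varepsilon)$ a node satisfies
\begin{equation*}
|\phi(\boldsymbol{x}\boldsymbol{w}+\beta)-\phi(\boldsymbol{y}\boldsymbol{w}+\beta)|\ge\rho(\varepsilon).
\end{equation*}
This follows by choosing $\boldsymbol{w}$ aligned with $\boldsymbol{x}-\boldsymbol{y}$ so that the two pre-activations lie in $I$ at distinct points, using strict monotonicity of $\phi$ on $I$.

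\textbf{Step 2 (uniform immersion).} Cover the compact set $\Omega\times S^{D-1}$ by $N_\eta$ balls of radius $\eta$. Here $\eta$ is chosen small relative to $\rho$ and to the a.s.\ bound on $\|\phi''\|_\infty$ and the parameter support, so that a node good at a center remains good (with constant $\rho/2$) on the whole ball. Then
\begin{equation*}
\Pr\big[\exists(\boldsymbol{x},\boldsymbol{v}):\ Dz(\boldsymbol{x})\boldsymbol{v}=0\big]\le N_\eta(1-p_0)^{nk}.
\end{equation*}
Indeed, $Dz(\boldsymbol{x})\boldsymbol{v}$ has entries $\phi'(\boldsymbol{x}\boldsymbol{w}_{e_i}+\beta_{e_i})\,\boldsymbol{v}\cdot\boldsymbol{w}_{e_i}$, and a single good node makes it nonzero. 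In fact, off this event one gets the quantitative lower bound
\begin{equation*}
\|Dz(\boldsymbol{x})\boldsymbol{v}\|\ge\rho/2\quad\text{for all }(\boldsymbol{x},\boldsymbol{v}).
\end{equation*}

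\textbf{Step 3 (injectivity) and the main obstacle.} Split $\Omega\times\Omega$ into a near-diagonal part $\{|\boldsymbol{x}-\boldsymbol{y}|<\varepsilon\}$ and a far part. The far part is handled exactly as in Step 2, using the second estimate of Step 1 and a finite cover of the compact set $\{|\boldsymbol{x}-\boldsymbol{y}|\ge\varepsilon\}$.

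The near-diagonal part is where I expect the real difficulty. Injectivity there must come from the immersion bound, via Taylor's theorem:
\begin{equation*}
\|z(\boldsymbol{y})-z(\boldsymbol{x})\|\ge\|Dz(\boldsymbol{x})(\boldsymbol{y}-\boldsymbol{x})\|-\tfrac12\sup\|D^2z\|\,|\boldsymbol{y}-\boldsymbol{x}|^2 .
\end{equation*}
The difficulty is that $\sup\|D^2z\|$ grows like $\sqrt{nk}$. So I will not use the single-good-node bound of Step 2. Instead I will use a concentration estimate: by a Chernoff bound, at least $p_0 nk/2$ nodes are good in each direction. This gives
\begin{equation*}
\|Dz(\boldsymbol{x})\boldsymbol{v}\|\gtrsim\rho\sqrt{p_0nk},
\end{equation*}
which scales like the second-derivative term. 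Hence a fixed $\varepsilon$, independent of $n$, suffices. Where Taylor's theorem is applied along the segment $[\boldsymbol{x},\boldsymbol{y}]$, I will either assume $\Omega$ is convex or work locally in a convex neighborhood of $\Omega$.

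\textbf{Step 4 (conclusion).} All failure probabilities decay as $C(1-p_0/2)^{nk}$, which is summable in $n$. By Borel--Cantelli, almost surely $z$ is an injective immersion for all sufficiently large $n$, and compactness of $\Omega$ yields the embedding.
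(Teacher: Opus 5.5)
Your argument is correct. It shares the paper's skeleton: show $z$ is an injective immersion, split injectivity into far and near-diagonal pairs, and get uniformity from a net, exponential concentration, a union bound and Borel--Cantelli. The mechanisms differ in two places.

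First, the paper derives its lower bounds from convergence to deterministic limits, $\frac{1}{nk}J_z^\top J_z\to A(\boldsymbol{x})$ and $\frac{1}{nk}\|z(\boldsymbol{x})-z(\boldsymbol{y})\|^2\to d(\boldsymbol{x},\boldsymbol{y})$. It then simply asserts that $A(\boldsymbol{x})$ is positive definite and $d>0$, the latter via a characteristic-kernel/full-support claim. Since $\boldsymbol{v}^\top A(\boldsymbol{x})\boldsymbol{v}=\mathbb{E}[\phi'(\boldsymbol{x}W+\beta)^2(\boldsymbol{v}\cdot W)^2]$ and $d(\boldsymbol{x},\boldsymbol{y})=\mathbb{E}[(\phi(\boldsymbol{x}W+\beta)-\phi(\boldsymbol{y}W+\beta))^2]$, your single-node events in Step 1 are exactly what those assertions need. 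So you prove what the paper takes for granted. Your added nondegeneracy hypothesis is genuinely necessary: if the pre-activations never leave a region where $\phi$ is constant, $z$ is constant. The paper uses it silently, and it is automatic for $\tanh$ with uniform weights.

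Second, and more substantively, the near-diagonal step differs. The paper uses the inverse function theorem and a Lebesgue number $\delta'$ for a cover on which $z$ is locally injective. But $\delta'$ depends on the realization of $z$, hence on $n$, while the $n$ required for the far part depends on $\delta\le\delta'$. The paper's choice is therefore circular as written. Your Chernoff count gives $\|Dz(\boldsymbol{x})\boldsymbol{v}\|\ge c\sqrt{nk}$ against the deterministic bound $\|D^2z\|\le C\sqrt{nk}$. This yields an $\varepsilon$ independent of $n$ and closes exactly this gap. The paper's own bound $\lambda_{\min}(G_n)\ge nk\lambda/2$ could have been used the same way, but is not. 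The paper's route describes the limiting geometry of the normalized map $z/\sqrt{nk}$; yours is elementary and gets the quantifiers in the right order.

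Two small repairs. Convexity of $\Omega$ is unnecessary: $z$ is defined on all of $\mathbb{R}^D$, the lower bound on $Dz$ is only needed at the base point $\boldsymbol{x}\in\Omega$, and the $D^2z$ bound holds on the convex hull of $\Omega$ (take the bound on $\phi''$ over the bounded range of pre-activations, not globally). For the pair estimate you need not put both pre-activations in $I$, which your hypothesis does not guarantee. For $\boldsymbol{x}\neq\boldsymbol{y}$ the map $(\boldsymbol{w},\beta)\mapsto(\boldsymbol{x}\boldsymbol{w}+\beta,\boldsymbol{y}\boldsymbol{w}+\beta)$ is a linear surjection onto $\mathbb{R}^2$. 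So on your positive-density open set you can move the first pre-activation within $I$ while holding the second fixed, and $\phi(a)-\phi(b)$ is then nonzero on an open set.
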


\begin{proof}
We prove that the random mapping \(\boldsymbol{x} \mapsto z(\boldsymbol{x})\) is almost surely a smooth embedding for sufficiently large \(n\) by establishing that it is an immersion and injective.

\noindent\textbf{Immersion.}
The Jacobian of \(z\) at \(\boldsymbol{x} \in \Omega\) is the \(nk \times D\) matrix
\begin{equation*}
J_z(\boldsymbol{x}) = \begin{bmatrix}
\phi'(\boldsymbol{x} \boldsymbol{w}_{e_1}+\beta_{e_1}) \boldsymbol{w}_{e_1}^\top \\
\vdots \\
\phi'(\boldsymbol{x} \boldsymbol{w}_{e_{nk}}+\beta_{e_{nk}}) \boldsymbol{w}_{e_{nk}}^\top
\end{bmatrix}.
\end{equation*}
The mapping \(z\) is an immersion if \(J_z(\boldsymbol{x})\) has rank \(D\) for every \(\boldsymbol{x} \in \Omega \), equivalently if the \(D \times D \) matrix
\begin{equation*}
G_n(\boldsymbol{x}) = J_z(\boldsymbol{x})^\top J_z(\boldsymbol{x}) = \sum_{i=1}^{nk} \bigl[\phi'(\boldsymbol{x} \boldsymbol{w}_{e_i}+\beta_{e_i})\bigr]^2\, \boldsymbol{w}_{e_i} \boldsymbol{w}_{e_i}^\top
\end{equation*}
is positive definite. Consider the normalized matrix \(\overline{G}_n(\boldsymbol{x})=\frac{1}{nk}G_n(\boldsymbol{x})\) and its population counterpart
\begin{equation*}
A(\boldsymbol{x})=\mathbb{E}\bigl[\,\bigl(\phi'(\boldsymbol{x}W+\beta)\bigr)^2 \, W W^\top \, \bigr],
\end{equation*}
where \((W,\beta)\) follows the same distribution as \((\boldsymbol{w}_{e_i},{\beta}_{e_i})\). Because \(\phi\) is non‑constant and smooth, and \((W,\beta)\) is drawn from a continuous distribution with finite second moments, \(A(\boldsymbol{x})\) is continuous in \(\boldsymbol{x}\) and positive definite for every \(\boldsymbol{x}\in\Omega\). By compactness of \(\Omega\), there exists \(\lambda>0\) such that \(\lambda_{\min}\bigl(A(\boldsymbol{x})\bigr)\ge\lambda\) for all \(\boldsymbol{x}\in\Omega\).

Take an \(\varepsilon\)-net \(\mathcal{N}_\Omega\) of \(\Omega\) with cardinality \(|\mathcal{N}_\Omega|=O(\varepsilon^{-D})\). For each fixed \(\boldsymbol{x}_0\in\mathcal{N}_\Omega\), the terms \(\bigl(\phi'(\boldsymbol{x}_0 \boldsymbol{w}_{e_i} + {\beta}_{e_i})\bigr)^2 \boldsymbol{w}_{e_i} \boldsymbol{w}_{e_i}^\top \) are i.i.d. and bounded (since \(\phi'\) is bounded and \(\mathbb{E}\|W\|^2<\infty\)). By Hoeffding's inequality and a union bound,
\begin{equation*}
\mathbb{P}\Bigl(\sup_{\boldsymbol{x}_0\in\mathcal{N}_\Omega}\|\overline{G}_n(\boldsymbol{x}_0)-A(\boldsymbol{x}_0)\|>\delta\Bigr)
\le C_1|\mathcal{N}_\Omega|\mathrm{e}^{-C_2n\delta^2}
\end{equation*}
for constants \(C_1,C_2>0\). Choosing \(\delta<\lambda/2\) and using Borel–Cantelli, we obtain almost sure uniform convergence on \(\mathcal{N}_\Omega\). By the smoothness of \(\phi\) and the continuity of \(A\), this convergence extends uniformly to all \(\boldsymbol{x}\in\Omega\). Hence, for all sufficiently large \(n\),
\begin{equation*}
\lambda_{\min}\bigl(\overline{G}_n(\boldsymbol{x})\bigr)\ge\frac{\lambda}{2}>0\qquad\text{almost surely for all }\boldsymbol{x}\in\Omega,
\end{equation*}
implying \(G_n(\boldsymbol{x})\) is positive definite and \(z\) is an immersion.

\noindent\textbf{Injectivity.}
Define the population kernel
\begin{equation*}
K(\boldsymbol{x},\boldsymbol{y})=\mathbb{E}\bigl[\phi(\boldsymbol{x}W+\beta)\,\phi(\boldsymbol{y}W+\beta)\bigr]
\end{equation*}
and the empirical kernel
\begin{equation*}
K_n(\boldsymbol{x},\boldsymbol{y})=\frac{1}{nk}\sum_{i=1}^{nk}\phi(\boldsymbol{x} \boldsymbol{w}_{e_i}+ \beta_{e_i})\,\phi(\boldsymbol{y} \boldsymbol{w}_{e_i}+\beta_{e_i}).
\end{equation*}
The squared distance between \(z(\boldsymbol{x})\) and \(z(\boldsymbol{y})\) satisfies
\begin{equation*}
\frac{1}{nk}\|z(\boldsymbol{x})-z(\boldsymbol{y})\|^2=K_n(\boldsymbol{x},\boldsymbol{x})-2K_n(\boldsymbol{x},\boldsymbol{y})+K_n(\boldsymbol{y},\boldsymbol{y}).
\end{equation*}
Because \(\phi\) is non‑constant and the distribution of \((W,\beta)\) has full support, the kernel \(K\) is characteristic; consequently,
\begin{equation*}
d(\boldsymbol{x},\boldsymbol{y}):=K(\boldsymbol{x},\boldsymbol{x})-2K(\boldsymbol{x},\boldsymbol{y})+K(\boldsymbol{y},\boldsymbol{y})>0\qquad\text{for all }\boldsymbol{x}\neq \boldsymbol{y}.
\end{equation*}
By compactness of \(\Omega\), the function \(d(\boldsymbol{x},\boldsymbol{y})\) is uniformly continuous and attains a positive minimum on \(\{(\boldsymbol{x},\boldsymbol{y})\in\Omega\times\Omega:\|\boldsymbol{x}-\boldsymbol{y}\|\ge\delta\}\) for any fixed \(\delta>0\).

Again using an \(\varepsilon\)-net argument, one shows that \(K_n\) converges uniformly to \(K\) on \(\Omega\times\Omega\). Hence, for sufficiently large \(n\),
\begin{equation*}
\bigl|\,\tfrac{1}{nk}\|z(\boldsymbol{x})-z(\boldsymbol{y})\|^2-d(\boldsymbol{x},\boldsymbol{y})\bigr|<\tfrac{1}{2}\inf_{\|\boldsymbol{x}'-\boldsymbol{y}'\|\ge\delta}d(\boldsymbol{x}',\boldsymbol{y}')
\end{equation*}
almost surely for all \(\boldsymbol{x},\boldsymbol{y}\in\Omega\). This guarantees \(\|z(\boldsymbol{x})-z(\boldsymbol{y})\|>0\) whenever \(\|\boldsymbol{x}-\boldsymbol{y}\|\ge\delta\).

For points with \(\|\boldsymbol{x}-\boldsymbol{y}\|<\delta\), the local injectivity follows from the fact that \(z\) is an immersion: by the inverse function theorem, each \(\boldsymbol{x} \in \Omega \) possesses a neighbourhood \(U_{\boldsymbol{x}}\) on which \(z\) is a diffeomorphism onto its image, hence injective. Compactness of \(\Omega\) yields a finite subcover \(\{U_{\boldsymbol{x}_k}\}_{k=1}^K\); let \(\delta'\) be a Lebesgue number of this cover. Choosing \(\delta\le\delta'\), any two distinct points with \(\|\boldsymbol{x}-\boldsymbol{y}\|<\delta\) lie in the same \(U_{\boldsymbol{x}_k}\) and therefore satisfy \(z(\boldsymbol{x})\neq z(\boldsymbol{y})\). Thus \(z\) is injective.

\noindent\textbf{Embedding.}
Since \(\Omega\) is compact and \(z:\Omega\to\mathbb{R}^{nk}\) is a smooth injective immersion, it is a smooth embedding (diffeomorphism onto its image). This completes the proof.

\end{proof}

\begin{table*}[!ht]
    \centering
    \caption{\textbf{Summary of benchmark problems and model hyperparameters.} The table lists the test cases (TC-1 to TC-11), their governing equations, domains, and boundary and initial conditions. $N_f$, $N_\text{bc}$, and $N_\text{ic}$ denote the number of interior collocation points, boundary points, and initial points, respectively. `Total Params' refers to the number of trainable weights in the PIBLS model.}
    \label{tab:benchmark_setup}
    \small
    \renewcommand{\arraystretch}{1.5} 
    \begin{tabular}{@{}c l >{\raggedright\arraybackslash}p{6.5cm} c c l@{}}
        
        \toprule
        & \textbf{ID} & \textbf{Description \newline (PDE, Domain, IC/BC, Solution)} & \textbf{Dim.} & \textbf{Total Params} & \textbf{Points \newline ($N_f, N_\text{bc}, N_\text{ic}$)} \\
        \midrule
        
        \parbox[c]{3mm}{\multirow{20}{*}{\rotatebox[origin=c]{90}{Linear}}}
        & TC-1 & Advection: $u_x = f(x)$ \newline Dom: $(0, 1)$, BCs: Dirichlet. \newline Sol: $u(x) = \sin(2\pi x)\cos(4\pi x) + 1$ & 1D & 1240 & (900, 2, 0) \\
        
        & TC-2 & Diffusion: $u_{xx} = f(x)$ \newline Dom: $(0, 1)$, BCs: Dirichlet. \newline Sol: $u(x) = \sin(\pi x/2)\cos(2\pi x) + 1$ & 1D & 140 & (100, 2, 0) \\
        
        & TC-3 & Adv.-Diff.: $u_x - 0.2 u_{xx} = 0$ \newline Dom: $(0, 1)$, BCs: Dirichlet. \newline Sol: $u(x) = (e^{5x} - 1) / (e^5 - 1)$ & 1D & 720 & (300, 2, 0) \\
        
        & TC-4 & Advection: $u_x + 0.5 u_y = f(x,y)$ \newline Dom: $(-1, 1)^2$, BCs: Dirichlet. \newline Sol: $u(x,y) = \frac{1}{2}\cos(\pi x)\sin(\pi y)$ & 2D & 1606 & (2800, 700, 0) \\
        
        & TC-5 & Poisson: $u_{xx} + u_{yy} = f(x,y)$ \newline Dom: $(0, 1)^2$, BCs: Dirichlet. \newline Sol: $u(x,y) = \frac{1}{2} + e^{-2x^2 - 4y^2}$ & 2D & 1400 & (1900, 400, 0) \\
        
        & TC-6 & Poisson: $u_{xx} + u_{yy} = f(x,y)$ \newline Dom: $(0, 1)^2$, BCs: Dirichlet. \newline Sol: $u(x,y) = \frac{1}{2} + e^{-(x-0.6)^2 - (y-0.6)^2}$ & 2D & 1240 & (2500, 1900, 0) \\
        
        & TC-7 & Wave (Const.): $u_t + u_x = 0$ \newline Dom: $(-1, 1) \times (0, 0.5)$, BCs: Periodic. \newline IC: $u(x,0) = \sin(\pi x)$ \newline Sol: $u(x,t) = \sin(\pi(x-t))$ & 1D+time & 1300 & (3800, 1700, 2300) \\
        
        & TC-8 & Wave (Var.): $u_t + (1+x)u_x = 0$ \newline Dom: $(-1, 1) \times (0, 0.5)$, BCs: Dirichlet. \newline IC: $u(x,0) = \sin(\pi x)$ \newline Sol: $u(x,t) = \sin(\pi((1+x)e^{-t} - 1))$ & 1D+time & 1300 & (1900, 800, 200) \\
        
        \midrule
        
        \parbox[c]{3mm}{\multirow{7.5}{*}{\rotatebox[origin=c]{90}{Nonlinear}}}
        & TC-9 & Helmholtz: $u_{xx} - 50u + 10\sin(u) = f(x)$ \newline Dom: $(0, 8)$, BCs: Dirichlet. \newline Sol:  \newline$u(x) = \sin(3\pi x + \frac{3\pi}{20})\cos(4\pi x - \frac{2\pi}{5}) + 1.5 + \frac{x}{10}$ & 1D & 1400 & (2800, 1200, 0) \\
        
        & TC-10 & Spring: $u_{tt} + 4u + 0.1\sin(u) = f(t)$ \newline Dom: $t \in (0, 2.5)$, ICs: $u(0)=0, u'(0)=0$. \newline Sol: $u(t) = t \sin(t)$ & time & 1100 & (1400, 0, 1) \\
        
        & TC-11 & Burger's: $u_t + uu_x - 0.01u_{xx} = f(x,t)$ \newline 
        Dom: $(0, 1) \times (0, 0.25)$, BCs: Dirichlet. \newline 
        Sol: $u(x,t) = (1 + \frac{x}{10})(1 + \frac{t}{10}) \times$ \newline 
        \phantom{Sol: }$\left[2\cos\left(\pi x + \frac{2\pi}{5}\right) + \frac{3}{2}\cos\left(2\pi x - \frac{3\pi}{5}\right)\right] \times$ \newline 
        \phantom{Sol: }$\left[2\cos\left(\pi t + \frac{2\pi}{5}\right) + \frac{3}{2}\cos\left(2\pi t - \frac{3\pi}{5}\right)\right]$ 
        & 1D+time & 415 & (1300, 800, 800) \\
        
        \bottomrule
    \end{tabular}
\end{table*}

\begin{theorem}[Universal Approximation in Sobolev Spaces]
\label{thm:universal_approximation}
For any compact domain with Lipschitz boundary $\Omega \subset \mathbb{R}^D$ and any continuous function $
u^* \in H^s(\Omega), s\in\{1,2\}$, there exists a sequence of PIBLS approximants $\{\hat{u}_{n,m}\}_{n,m\in\mathbb{N}}$ under Assumptions~\ref{assump:activation_regular}--\ref{assump:parameter_distribution} such that
\begin{equation*}
\lim_{n,m\to\infty}
\|u^* - \hat{u}_{n,m}\|_{H^s(\Omega)} = 0.
\end{equation*}

\end{theorem}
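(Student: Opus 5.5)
The plan is to prove the theorem entirely through the feature-node part $u^{\mathbf{Z}}_n$. The enhancement part $u_m^{\mathbf{H}}$ can only help, because the output weights are chosen freely, or by least squares, which is at least as good as any particular choice. So we may set the enhancement weights to zero, or, as a refinement, use them to absorb residual error. The key is a density statement for ridge functions in Sobolev spaces. Since $\phi$ is non-constant, bounded and $C^\infty$ (Assumption~\ref{assump:activation_regular}), it is not a polynomial. Hornik's theorem on approximating functions together with their derivatives (1991), or Pinkus's $C^k$ density theorem for ridge functions, then gives the following: the span of $\{\phi(\boldsymbol{x}\boldsymbol{w}+\beta)\}$ is dense in $C^s(\overline{\Omega})$ on compact sets. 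Because $\Omega$ is bounded, $\|v\|_{H^s(\Omega)}\le C(\Omega,s)\,\|v\|_{C^s(\overline{\Omega})}$. Because $\Omega$ is Lipschitz, $C^\infty(\overline{\Omega})$ is dense in $H^s(\Omega)$ (Stein extension plus mollification). Together, for every $\varepsilon>0$ there is a finite combination $g_\varepsilon=\sum_{\ell=1}^L c_\ell\phi(\boldsymbol{x}\boldsymbol{w}_\ell^\circ+\beta_\ell^\circ)$ with $\|u^*-g_\varepsilon\|_{H^s}<\varepsilon$. The parameters $(\boldsymbol{w}_\ell^\circ,\beta_\ell^\circ)$ can be taken in the interior of the support of the sampling distribution, after rescaling if needed. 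Here I will assume the support has nonempty interior, and invoke the support condition of Assumption~\ref{assump:parameter_distribution}, as already used in Lemma~\ref{lemma:feature_embedding}.

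The second step passes from this deterministic approximant to the random features. The map $(\boldsymbol{w},\beta)\mapsto\phi(\cdot\,\boldsymbol{w}+\beta)\in C^s(\overline{\Omega})$ is continuous. Indeed, each derivative up to order $s$ is a polynomial in $\boldsymbol{w}$ times $\phi^{(r)}$, and these are uniformly continuous on compact parameter sets. So there is $\eta>0$ such that replacing each $(\boldsymbol{w}_\ell^\circ,\beta_\ell^\circ)$ by any parameter within distance $\eta$ changes $g_\varepsilon$ by less than $\varepsilon$ in $C^s$, hence by at most $C\varepsilon$ in $H^s$. Each $\eta$-ball has positive probability $p_\ell>0$ under the continuous sampling distribution with full support. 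With $nk$ i.i.d.\ draws, the probability that some ball stays empty is at most $\sum_\ell(1-p_\ell)^{nk}\to0$, and this is summable in $n$. By Borel--Cantelli, almost surely every ball is hit for all large $n$. Assigning the weights $c_\ell$ to the hitting nodes and $0$ to all others yields $u^{\mathbf{Z}}_n$ with $\|u^*-u^{\mathbf{Z}}_n\|_{H^s}\le(1+C)\varepsilon$.

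Finally, I take $\varepsilon=1/j$ along a diagonal sequence. The least-squares or best-approximation choice of $\mathbf{W}_\text{out}$ over the full basis $[\mathbf{Z}\mid\mathbf{H}]$ does no worse than this construction, with enhancement weights zero. This gives $\lim_{n,m\to\infty}\|u^*-\hat u_{n,m}\|_{H^s}=0$ almost surely, and the limit is uniform in $m$ because the bound does not depend on $m$. Lemma~\ref{lemma:feature_embedding} could alternatively be used to show that the enhancement nodes $\xi(z(\boldsymbol{x})\boldsymbol{w}_h+\beta_h)$ are themselves universal on the embedded manifold $z(\Omega)$, pulled back by a diffeomorphism. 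That gives a second route in which $m\to\infty$ alone suffices, but it is not needed.

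The main obstacle is the first step: establishing simultaneous approximation of derivatives up to order $2$ in $H^s$, not merely uniform approximation. I would handle it by citing Hornik's derivative-approximation theorem. That theorem requires $\phi\in C^s$ bounded and non-constant, which Assumption~\ref{assump:activation_regular} provides. I would combine it with Sobolev density of smooth functions on Lipschitz domains. The secondary subtlety is that the support assumption must be strong enough for the required parameters to lie in the support. If the support is only a fixed bounded box, I would argue density using ridge functions with parameters from an open set. This still holds for non-polynomial analytic-type $\phi$, such as $\tanh$, by differentiating in $\boldsymbol{w}$ to generate polynomial times $\phi^{(r)}$ terms.
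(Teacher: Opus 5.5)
Your proof is correct in substance, but it takes a genuinely different route from the paper.

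\textbf{The paper's route.} The paper makes the enhancement layer do all the work:
\begin{itemize}
\item it fixes $n$ so that $z$ is an embedding (Lemma~\ref{lemma:feature_embedding}) and leaves $u^{\mathbf{Z}}_n$ arbitrary;
\item it approximates the residual $u^*-u^{\mathbf{Z}}_n$ by a smooth $h$ and transports $h$ to $z(\Omega)\subset\mathbb{R}^{nk}$ through $z^{-1}$;
\item it applies a Sobolev universal-approximation theorem to the ridge functions $\xi(z\boldsymbol{w}_{h_j}+\beta_{h_j})$ on $z(\Omega)$, then pulls back with a norm-equivalence constant $C_z$.
\end{itemize}
This is exactly the ``second route'' you mention and set aside.

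\textbf{Your route and what it buys.} You show that the feature layer alone, a random one-hidden-layer network in $\boldsymbol{x}$, is already dense. You combine Hornik/Pinkus $C^s$-density, density of $C^\infty(\overline{\Omega})$ in $H^s(\Omega)$, and an explicit hitting/Borel--Cantelli argument for the random parameters. This is more elementary in three ways.
\begin{itemize}
\item You do not need Lemma~\ref{lemma:feature_embedding}, whose injectivity step appeals to a full-support, characteristic-kernel property that sits uneasily with the compact support of Assumption~\ref{assump:parameter_distribution}.
\item You do not need a Euclidean Sobolev approximation theorem transplanted to a $D$-dimensional submanifold of $\mathbb{R}^{nk}$, with an $n$-dependent constant $C_z$.
\item You treat the randomness explicitly, and since your bound does not depend on $m$, you get the double limit directly. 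The paper's argument exhibits one suitable pair $(n,m)$ per $\epsilon$ and then asserts the limit.
\end{itemize}
What you give up is any role for the enhancement nodes: your argument shows they are not needed for universality.

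\textbf{Parameter support.} There is no ``rescaling'' that moves the Hornik--Pinkus parameters into a bounded support, because $\Omega$ is fixed. So under Assumption~\ref{assump:parameter_distribution} your fallback is the actual argument, and it needs more than Assumption~\ref{assump:activation_regular}. Take a bounded $C^\infty$ $\phi$ that is affine on the bounded set of values $\boldsymbol{x}\boldsymbol{w}+\beta$ reachable with bounded parameters. Then every feature node is affine in $\boldsymbol{x}$, and the feature-only route fails.

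For $\tanh$ the fallback does work, most cleanly by analyticity:
\begin{itemize}
\item For $\mu\in(C^s(\overline{\Omega}))^*$, the map $(\boldsymbol{w},\beta)\mapsto\mu\bigl(\phi(\cdot\,\boldsymbol{w}+\beta)\bigr)$ is real-analytic on $\mathbb{R}^{D+1}$.
\item If it vanishes on a support with nonempty interior, it vanishes identically.
\item The unrestricted density theorem plus Hahn--Banach then give density with restricted parameters.
\end{itemize}
Alternatively, if the support contains a neighbourhood of $\boldsymbol{w}=0$, differentiate at $\boldsymbol{w}=0$ to obtain $\boldsymbol{x}^{\alpha}\phi^{(|\alpha|)}(\beta)$, and hence all polynomials. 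So state your result for real-analytic non-polynomial $\phi$, or for full-support sampling.

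\textbf{Least-squares remark.} Your claim that the least-squares $\mathbf{W}_\text{out}$ ``does no worse'' than your construction holds only for the $H^s$-best approximation. The PIBLS solve minimizes a discrete PDE residual, not the $H^s$ error. The theorem is existential, so simply drop that claim.
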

\begin{proof}
Fix $u^*\in H^s(\Omega)$ and $\epsilon>0$.  
Let $n\in\mathbb{N}$ be large enough that the feature map $\boldsymbol{x} \mapsto z(\boldsymbol{x})$ is a smooth
embedding onto $z(\Omega)$ (Lemma~\ref{lemma:feature_embedding}).  
For this $n$ define the feature-layer approximation
\begin{equation*}
u^{\mathbf{Z}}_n(\boldsymbol{x})=\sum_{i=1}^{nk} w_i\,\phi(\boldsymbol{x} \boldsymbol{w}_{e_i} + {\beta}_{e_i})
\end{equation*}
and the residual
\begin{equation*}
r_n(\boldsymbol{x})=u^*(\boldsymbol{x})-u^{\mathbf{Z}}_n(\boldsymbol{x}).
\end{equation*}

Since $u^*\in H^s(\Omega)$ and $u^{\mathbf{Z}}_n$ is smooth on $\Omega$, we have
$r_n\in H^s(\Omega)$. In particular $r_n$ belongs to $L^2(\Omega)$ (hence is
square-integrable) and is bounded in the $H^s$-norm, and thus is approximable by smooth functions in the $H^s$-topology.

By density of $C^\infty(\Omega)$ in $H^s(\Omega)$ there exists
$h\in C^\infty(\Omega)$, such that for any $\epsilon > 0$~\cite{sobolev_spaces}, we have
\begin{equation*}
\|r_n - h\|_{H^s(\Omega)} < \frac{\epsilon}{2}.
\end{equation*}

Because $\boldsymbol{x} \mapsto z(\boldsymbol{x})$ is a diffeomorphism from $\Omega$ onto the compact
set $z(\Omega)$, the pullback $\tilde h := h\circ z^{-1}$ is in
$C^\infty\big(z(\Omega)\big)$. Standard results on Sobolev spaces under smooth
coordinate changes (equivalence of Sobolev norms on compact sets under a
diffeomorphism) yield a constant $C_z\ge 1$ (depending only on $z$ and
$\Omega$) such that for any function $v$ on $z(\Omega)$,
\begin{equation*}
C_z^{-1}\|v\circ z\|_{H^s(\Omega)} \le \|v\|_{H^s(z(\Omega))} \le C_z\|v\circ z\|_{H^s(\Omega)}.
\end{equation*}

Consider enhancement-layer approximants of the form
\begin{equation*}
u_m^{\mathbf{H}}(\boldsymbol{x})=\sum_{j=1}^{mq} w_{nk+j}\,\xi\!\big(z\boldsymbol{w}_{h_j}+{\beta}_{h_j}\big),
\end{equation*}
which, viewed on $z(\Omega)$, are finite linear combinations of the activation
$\xi$ applied to affine functionals on $\mathbb{R}^{nk}$. 

By the  universal approximation property in Theorem~3.9 of~\cite{de2025approximation}, for bounded, nonconstant, $C^\infty$ activations in
Sobolev norms on compact domains (applied to the embedded domain
$z(\Omega)$ and under the i.i.d.\ parameter sampling of
Assumption~\ref{assump:parameter_distribution}), there exists $m\in\mathbb{N}$
and output weights $w_{nk+1},\dots,w_{nk+mq}$ such that for any $\epsilon > 0$, the corresponding enhancement approximation satisfies
\begin{equation*}
\big\|u^{\mathbf{H}}_m\circ z^{-1} - \tilde h\big\|_{H^s(z(\Omega))} < C_z^{-1}\,\frac{\epsilon}{2}.
\end{equation*}

Using the norm equivalence above and pulling back to $\Omega$ we obtain
\begin{equation*}
\|u^{\mathbf{H}}_m - h\|_{H^s(\Omega)} < \frac{\epsilon}{2}.
\end{equation*}

Combining the two approximations gives
\begin{align*}
\|u^* - (u^{\mathbf{Z}}_n + u^{\mathbf{H}}_m)\|_{H^s(\Omega)} 
    &= \|r_n - u^{\mathbf{H}}_m\|_{H^s(\Omega)} \\
    &\le \|r_n - h\|_{H^s(\Omega)} + \|h - u^{\mathbf{H}}_m\|_{H^s(\Omega)} \\
    &< \frac{\epsilon}{2} + \frac{\epsilon}{2} \\
    &= \epsilon.
\end{align*}
Hence, we could conclude that
\begin{equation*}
\lim_{n,m\to\infty}
\|u^* - \hat{u}_{n,m}\|_{H^s(\Omega)} = 0.
\end{equation*}
\end{proof}

\begin{table*}[!ht]
    \centering
    \caption{\textbf{Performance comparison on linear test cases.} Values in each cell: Max Error / $L_2$ Error / Training Time (seconds). PINN \#1 has a total parameter count comparable to PIBLS; PINN \#2 has a fixed architecture (10 hidden layers, 50 nodes/layer). FEM results correspond to a mesh with 20,000 elements (1D) or 150$\times$150 (2D). The best accuracy for each test case is highlighted in red. Full experimental details are provided in the Methods.}
    \label{tab:linear_results}
    \footnotesize
    \setlength{\tabcolsep}{2pt}
    \renewcommand{\arraystretch}{1.2}
    \begin{tabular}{lcccc}
        \toprule
        Method & TC-1 & TC-2 & TC-3 & TC-4 \\
        \midrule
        PINN \#1 & 
        2.16e-04 / 9.18e-05 / 16.61 & 
        8.89e-05 / 6.73e-05 / 13.77 & 
        1.95e-04 / 9.46e-05 / 16.75 & 
        1.62e-02 / 5.16e-03 / 29.71 \\
        \midrule
        PINN \#2 & 
        1.43e-04 / 4.26e-05 / 325 & 
        2.44e-04 / 1.34e-04 / 310 & 
        6.27e-06 / 3.17e-06 / 382 & 
        9.78e-04 / 2.31e-04 / 849 \\
        \midrule
        PIELM & 
        5.96e-07 / 1.61e-07 / 0.10 & 
        3.57e-07 / 9.67e-08 / 0.02 & 
        1.29e-07 / 3.69e-08 / 0.03 & 
        8.71e-08 / 2.43e-08 / 0.38 \\
        \midrule
        FEM & 
        1.78e-07 / 1.27e-07 / 0.03 &    
        2.59e-08 / 2.00e-08 / 0.03 & 
        1.25e-09 / 7.19e-10 / 0.03 & 
        2.77e-08 / 1.39e-08 / 0.05 \\
        \midrule
        PIBLS & 
        \textcolor{red}{2.00e-15} / \textcolor{red}{5.99e-16} / 0.13 & 
        \textcolor{red}{4.22e-15} / \textcolor{red}{1.05e-15} / 0.01 & 
        \textcolor{red}{7.77e-16} / \textcolor{red}{2.55e-16} / 0.03 & 
        \textcolor{red}{8.40e-13} / \textcolor{red}{2.98e-13} / 0.76 \\
        \bottomrule
    \end{tabular}
    \vspace{6pt} 
    
    \begin{tabular}{lcccc}
        \toprule
        Method & TC-5 & TC-6 & TC-7 & TC-8 \\
        \midrule
        PINN \#1 & 
        2.02e-02 / 6.14e-03 / 58.01 & 
        2.69e-03 / 5.90e-04 / 60.1 & 
        2.54e-02 / 6.04e-03 / 37.34 & 
        1.47e-03 / 7.57e-04 / 23.77 \\
        \midrule
        PINN \#2 & 
        4.06e-03 / 3.72e-01 / 1663 & 
        1.84e-03 / 4.46e-04 / 1844 & 
        4.45e-03 / 5.24e-04 / 698 & 
        3.71e-03 / 5.35e-04 / 1116 \\
        \midrule
        PIELM & 
        1.06e-07 / 2.74e-07 / 0.24 & 
        1.41e-07 / 4.81e-08 / 0.28 & 
        5.98e-05 / 2.21e-06 / 0.57 & 
        4.88e-07 / 9.73e-08 / 0.25 \\
        \midrule
        FEM & 
        8.97e-10 / 3.31e-10 / 0.05 & 
        1.23e-10 / 6.43e-11 / 0.05 & 
        5.08e-05 / 4.22e-02 / 0.03 & 
        1.32e-04 / 2.66e-05 / 0.01 \\
        \midrule
        PIBLS & 
        \textcolor{red}{6.51e-14} / \textcolor{red}{7.69e-15} / 0.38 & 
        \textcolor{red}{3.09e-14} / \textcolor{red}{3.37e-15} / 0.51 & 
        \textcolor{red}{9.49e-13} / \textcolor{red}{2.39e-13} / 0.70 & 
        \textcolor{red}{2.09e-11} / \textcolor{red}{4.71e-12} / 0.24 \\
        \bottomrule
    \end{tabular}
\end{table*}

\section{Results}

We conducted a comprehensive evaluation across a diverse suite of 11 benchmark problems, summarized in Table~\ref{tab:benchmark_setup}. The evaluation begins by assessing accuracy and computational speed on eight linear PDEs, providing direct comparisons against standard PINN, PIELM, and FEM. Subsequently, the scope extends to challenging nonlinear problems, employing locELM~\cite{LocalELM} in place of PIELM for its capacity to solve nonlinear equations. The evaluation concludes with a sensitivity analysis of key hyperparameters to assess the robustness of the proposed method.

\subsection{Experimental setup and evaluation}
For all theoretical test cases, the source terms as well as the boundary and initial conditions were derived from manufactured solutions, enabling the precise calculation of solution errors. The detailed definitions of these test cases, including the physical models and computational constraints, are outlined in Table~\ref{tab:benchmark_setup}. We then compared the performance of PIBLS against five key baseline models. These baselines include:
\begin{itemize}
\item \textbf{PINN \#1:} A standard PINN whose architecture was tuned to match the total parameter count of the corresponding PIBLS model. The model was trained using the Adam optimizer for 5000 iterations.
\item \textbf{PINN \#2:} A deeper PINN with a fixed architecture consisting of 10 hidden layers, each with 50 nodes. This model was trained with Adam for 45,000 iterations, followed by L-BFGS optimization.
\item \textbf{PIELM:} Physics-Informed Extreme Learning Machines (PIELM). To ensure a fair comparison, the number of random nodes was set to match the total number of trainable parameters in the corresponding PIBLS model.
\item \textbf{locELM:} A method that combines domain decomposition with Extreme Learning Machines. This method served as the primary baseline for the nonlinear benchmarks (TC-9, TC-10, and TC-11), and its NLSQ-perturb parameters were configured as described in the original work~\cite{LocalELM}.
\item \textbf{FEM:} The standard Finite Element Method, implemented in FEniCS. A high-resolution mesh with 20,000 elements was used for 1D problems, and a $150 \times 150$ mesh was used for 2D problems.
\end{itemize}
Our PIBLS model was implemented in Python, and all computational benchmarks were conducted on a system equipped with an Intel Core i5-12600KF CPU and 32GB of RAM, running on the Windows Subsystem for Linux (WSL) platform.

\subsection{Performance on linear PDEs}
The evaluation begins with a range of linear PDEs taken from~\cite{PINN_complex_geometry,TC8_equations}, comprising both one-dimensional (1D) and two-dimensional (2D) test cases.

\subsubsection{One-dimensional steady-state problems}

The initial benchmarks involved three 1D steady-state PDEs: the advection equation (TC-1), the diffusion equation (TC-2), and the advection-diffusion equation (TC-3). For these benchmarks, we used PIBLS architectures with total trainable parameters ranging from 140 to 1240 and limited sampling points, as detailed in Table~\ref{tab:benchmark_setup}.

As shown in Fig.~\ref{fig:tc1-tc3}a-c, the PIBLS predictions match the exact analytical solutions remarkably closely. The corresponding point-wise error plots in Fig.~\ref{fig:tc1-tc3}d-f further confirm that errors remain negligible, approaching machine precision across the entire domain. By avoiding iterative optimization, PIBLS is not susceptible to the local minima or optimization pathologies that can affect gradient-based solvers. Instead, it finds the globally optimal solution for the given basis functions in a single step.

Quantitative comparisons in Table~\ref{tab:linear_results} show the full extent of this precision. For TC-1, PIBLS achieved an $L_2$ error of $5.99 \times 10^{-16}$. This precision is particularly remarkable as it is 10 to 11 orders of magnitude more accurate than both the parameter-matched PINN \#1 ($9.18 \times 10^{-5}$) and the deep PINN \#2 ($4.26 \times 10^{-5}$), and 9 orders lower than PIELM ($1.61 \times 10^{-7}$). This comparison highlights the practical limitations of gradient-based PINN solvers, which struggled to converge and stagnated at significantly higher error rates. In contrast, the PIBLS direct-solve method reliably converged to the limit of machine precision. This trend of machine-precision accuracy held for the other 1D test cases where the PIBLS error of $1.05 \times 10^{-15}$ for TC-2 remained 7 to 10 orders lower than all baselines. Similarly for TC-3, PIBLS outperformed all competitors by 8 to 10 orders of magnitude with an error of $2.55 \times 10^{-16}$. Notably, in all three 1D test cases, the PIBLS solution was significantly more accurate than even the high-resolution FEM and PIELM baselines.

Critically, this precision was achieved with exceptional speed. For these 1D benchmarks, PIBLS solution times ranged from 0.01~s to 0.13~s. In contrast, the standard PINN \#1 required between 13.77 and 16.75~s to converge, while the deep PINN \#2 demanded significantly more time, ranging from 310 to 382~s. This distinct performance gap underscores a fundamental difference in computational complexity, as the framework replaces the computationally expensive iterative optimization process with the cost of a single least-squares solve.

\subsubsection{Two-dimensional steady-state problems}

\begin{figure}[!t]
    \centering
    \includegraphics[width=0.95\textwidth]{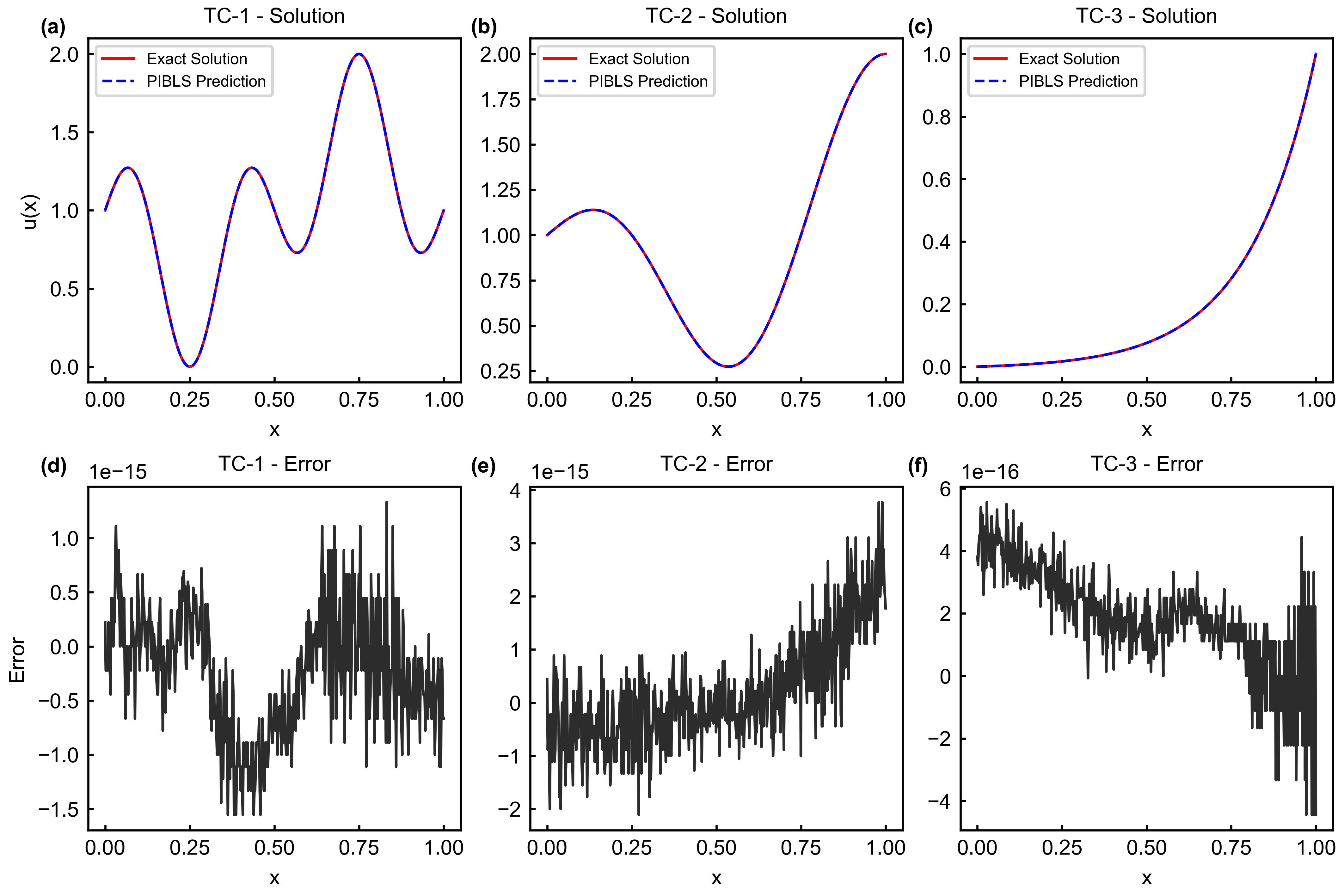}
    \caption{\textbf{Solutions and errors for 1D steady-state problems.} \textbf{a}-\textbf{c} Comparison of PIBLS predictions (blue dashed lines) and exact solutions (red solid lines) for TC-1, TC-2, and TC-3, respectively. \textbf{d}-\textbf{f} Corresponding point-wise error plots.}
    \label{fig:tc1-tc3}
\end{figure}

\begin{figure}[!t]
    \centering
    \includegraphics[width=0.95\textwidth]{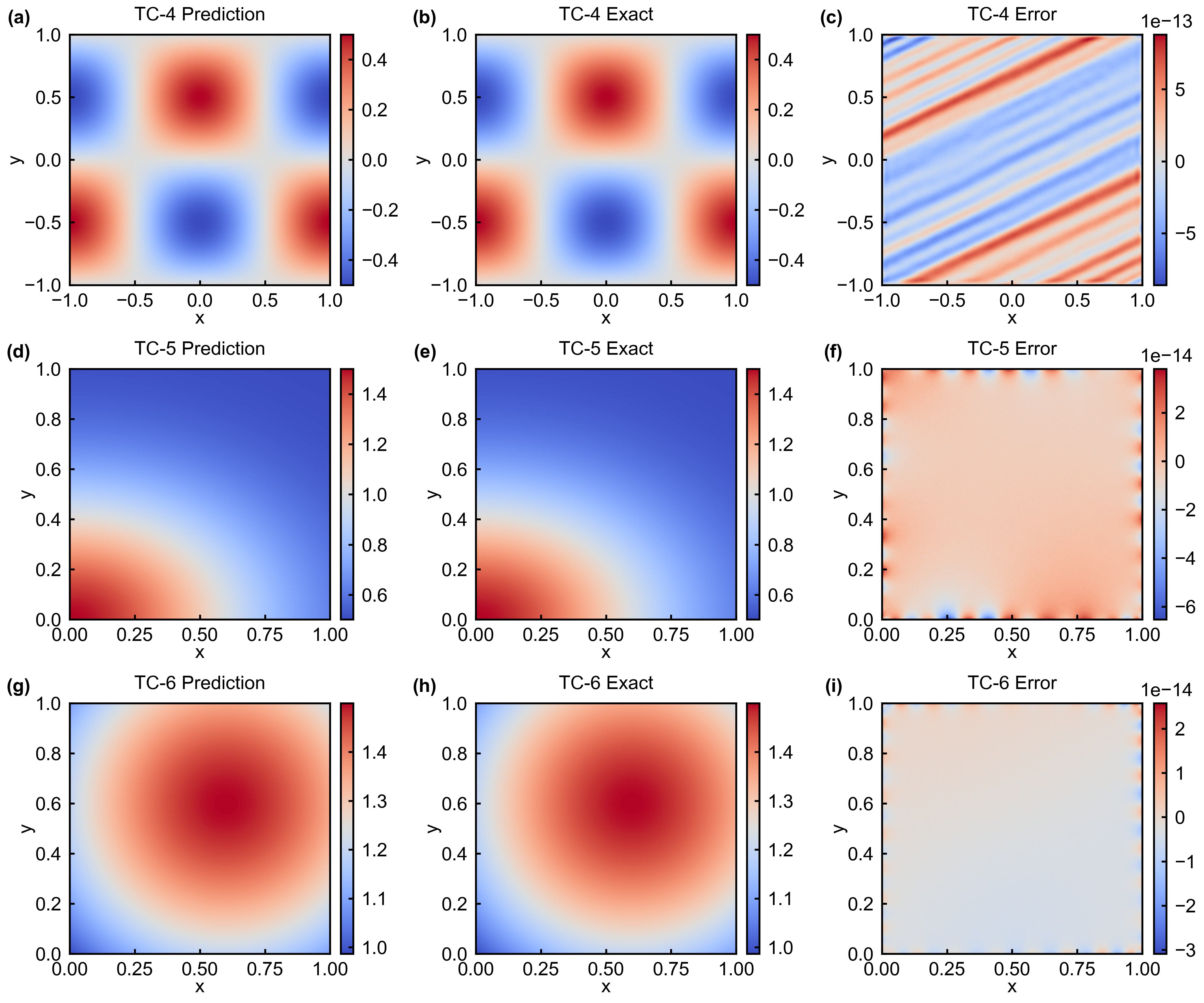}
    \caption{\textbf{Solutions and error contours for 2D linear steady-state problems.} \textbf{a}, \textbf{d}, \textbf{g} PIBLS predictions. \textbf{b}, \textbf{e}, \textbf{h} Exact solutions. \textbf{c}, \textbf{f}, \textbf{i} Point-wise error contours. The rows correspond to test cases TC-4  (\textbf{a}-\textbf{c}), TC-5 (\textbf{d}-\textbf{f}), and TC-6 (\textbf{g}-\textbf{i}).}
    \label{fig:tc4-tc6}
\end{figure}

To assess the framework's ability to handle multi-dimensional inputs and more complex solution field distributions, we next tested PIBLS on 2D steady-state problems: the advection equation (TC-4) and two Poisson equations (TC-5, TC-6). For this set of 2D benchmarks, we utilized PIBLS architectures with total parameters ranging from 1240 to 1606, with specific sampling point distributions detailed in Table~\ref{tab:benchmark_setup}.

As illustrated in Fig.~\ref{fig:tc4-tc6}, the PIBLS solutions are qualitatively indistinguishable from the exact solutions, and the corresponding error contours confirm that the point-wise error remains vanishingly small across the 2D domains.

As detailed in Table~\ref{tab:linear_results}, PIBLS substantially outperformed all baselines in accuracy. 
For TC-4, its $L_2$ error of $2.98 \times 10^{-13}$ was 5 orders of magnitude lower than the high-resolution FEM ($1.39 \times 10^{-8}$). This precision contrasts sharply with the gradient-based solvers, as both PINN \#1 ($5.16 \times 10^{-3}$) and the deep PINN \#2 ($2.31 \times 10^{-4}$) struggled to converge to a comparable solution. Similarly for TC-5 and TC-6, PIBLS errors reached $7.69 \times 10^{-15}$ and $3.37 \times 10^{-15}$, respectively. This represents an improvement of 4 to 5 orders of magnitude over the fine-mesh FEM ($3.31 \times 10^{-10}$ and $6.43 \times 10^{-11}$, respectively) and an even greater improvement over both PINN and PIELM models. The ability of PIBLS to outperform high-resolution FEM is particularly notable, achieving near-analytical fidelity that typically necessitates computationally prohibitive fine-mesh discretizations.

Computationally, this efficiency was confirmed by the runtime analysis. PIBLS remained highly efficient, with solve times for the 2D problems (0.38–0.76~s) consistently comparable to the other fast, non-iterative solvers like PIELM and FEM. The critical distinction emerged against the gradient-based architectures. For TC-4, PIBLS was over 39 times faster than PINN \#1 (29.71~s) and over 1100 times faster than the deep PINN \#2 (849.74~s). This speed advantage was even more stark for TC-5 and TC-6, where PIBLS solved in under a second, while the deep PINN \#2 required 1663~s and 1844~s, respectively. This massive speed differential underscores the severe computational burden of training deep, iterative-gradient architectures, a bottleneck that PIBLS effectively eliminates.

\subsubsection{One-dimensional time-dependent problems}

\begin{figure}[!t]
    \centering
    \includegraphics[width=0.95\textwidth]{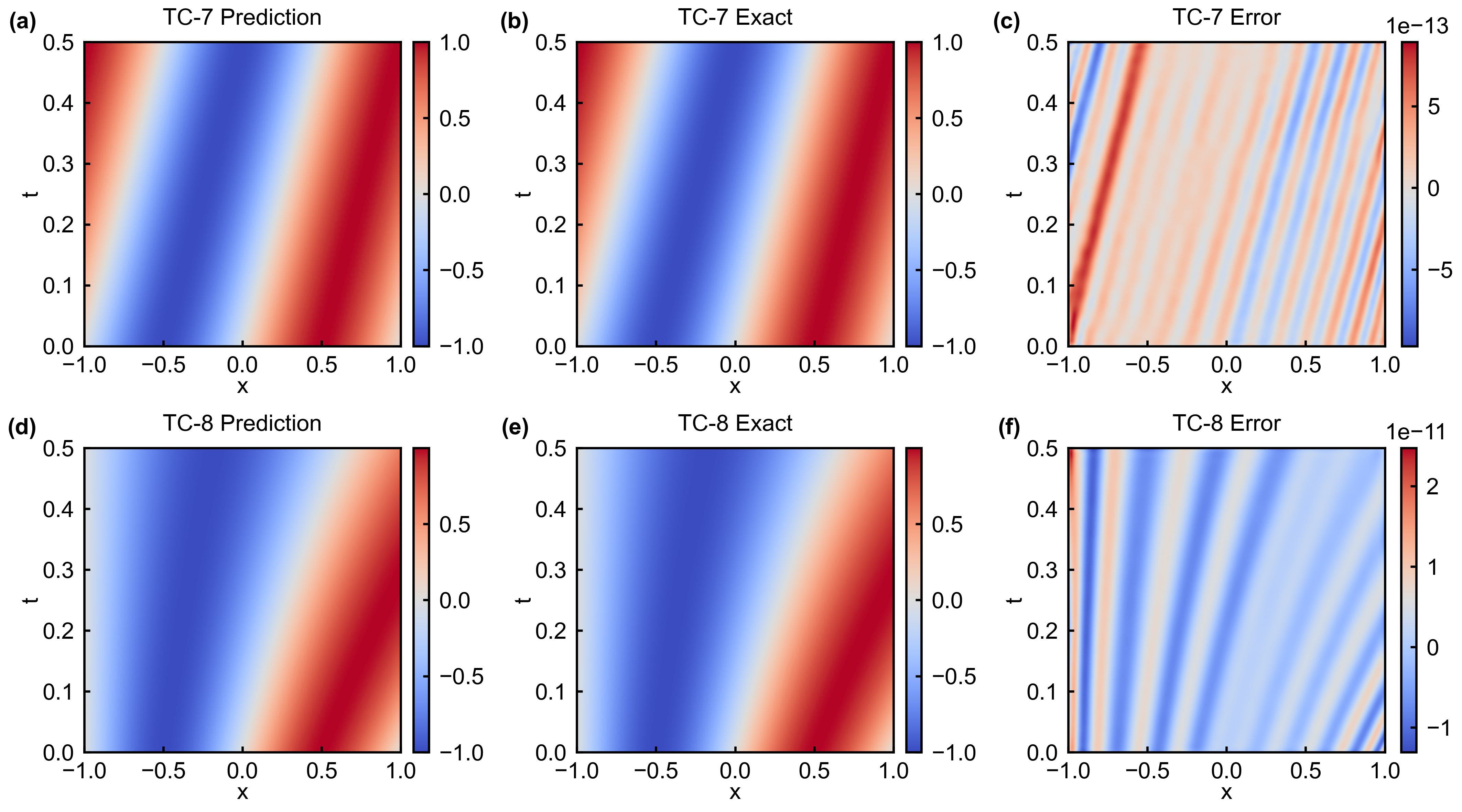}
    \caption{\textbf{Spatio-temporal solutions and errors contours for 1D time-dependent advection problems.} \textbf{a}-\textbf{c} PIBLS predicted solution surface (\textbf{a}), exact solution surface (\textbf{b}), and point-wise error contour (\textbf{c}) for TC-7. \textbf{d}-\textbf{f} PIBLS predicted solution surface (\textbf{d}), exact solution surface (\textbf{e}), and point-wise error contour (\textbf{f}) for TC-8.}
    \label{fig:tc7-tc8_results}
\end{figure}

We further evaluated PIBLS on 1D time-dependent advection problems to test its ability to capture spatio-temporal dynamics. These included a constant coefficient advection equation with periodic boundary conditions (TC-7) and a variable coefficient advection equation with Dirichlet boundary conditions (TC-8)~\cite{TC8_equations}. For these spatio-temporal problems, time $t$ is treated as an additional input coordinate, allowing PIBLS to learn the entire spatio-temporal solution field at once. As specified in Table~\ref{tab:benchmark_setup}, we used PIBLS models with 1300 trainable parameters for these tasks.

As shown in the solution surfaces in Fig.~\ref{fig:tc7-tc8_results}, PIBLS accurately captures the complex wave propagation dynamics at different time points. This qualitative success is corroborated by the quantitative metrics in Table~\ref{tab:linear_results}. PIBLS achieved $L_2$ errors of $2.39 \times 10^{-13}$ for TC-7 and $4.71 \times 10^{-12}$ for TC-8. 
These results highlight a decisive advantage of the framework. While its solution times (0.70~s and 0.24~s) were on a comparable order of magnitude to PIELM (0.57~s and 0.25~s), the achieved accuracy was 5 to 7 orders of magnitude greater. 
Simultaneously, the framework maintains a substantial speed advantage over the PINN baselines, which required between 23~s and 1116~s to converge. This ability to combine machine precision with extreme speed demonstrates the capability of the broad feature space to represent complex, time-evolving dynamics within a single-step optimization process.

\subsection{Solving challenging nonlinear PDEs}
We subsequently tested PIBLS on a series of nonlinear PDEs from~\cite{LocalELM}. The nonlinearity of the differential operators introduces complex optimization problems that, as our results show, often impede the convergence of certain solvers.

\subsubsection{Nonlinear steady-state and dynamic problems}

\begin{table*}[!ht]
    \centering
    \caption{\textbf{Performance comparison on nonlinear test cases.} Values in each cell: Max Error / $L_2$ Error / Training Time (seconds). PINN \#1, PINN \#2, and FEM definitions are consistent with those in Table~\ref{tab:linear_results}. The best accuracy for each test case is highlighted in red. Note: For TC-10, standard spatial FEM is not applicable; $*$ denotes results obtained using Newmark-beta time integration as the numerical baseline.}
    \label{tab:nonlinear_tableresults}
    \footnotesize
    \setlength{\tabcolsep}{2pt}
    \renewcommand{\arraystretch}{1.2} 
    \begin{tabular}{lccc}
        \toprule
        Method & TC-9 & TC-10 & TC-11 \\
        \midrule
        PINN \#1 & 
        1.31e+00 / 5.44e-01 / 367 & 
        1.05e-04 / 5.38e-05 / 26.5 & 
        1.82e+01 / 4.64e+00 / 75.9 \\ 
        \midrule
        PINN \#2 & 
        1.30e+00 / 5.44e-01 / 1082 & 
        2.53e-06 / 2.85e-06 / 812 & 
        2.95e+01 / 5.74e+00 / 1835 \\ 
        \midrule
        locELM & 
        1.45e-09 / 2.34e-10 / 7.7 & 
        2.82e-11 / 1.12e-11 / 0.34 & 
        1.85e-08 / 4.44e-09 / 27.6 \\ 
        \midrule
        FEM & 
        5.38e-08 / 6.66e-07 / 0.03 &    
        4.41e-10$^*$ / 3.02e-10$^*$ / 3.72$^*$ & 
        4.93e-02 / 2.04e-02 / 0.06 \\ 
        \midrule
        PIBLS & 
        \textcolor{red}{2.76e-11} / \textcolor{red}{4.69e-12} / 6.76 & 
        \textcolor{red}{8.88e-16} / \textcolor{red}{2.40e-16} / 3.2 & 
        \textcolor{red}{8.63e-10} / \textcolor{red}{9.88e-11} / 0.56 \\
        \bottomrule
    \end{tabular}
\end{table*}

\begin{figure}[!t]
    \centering
    \includegraphics[width=0.95\textwidth]{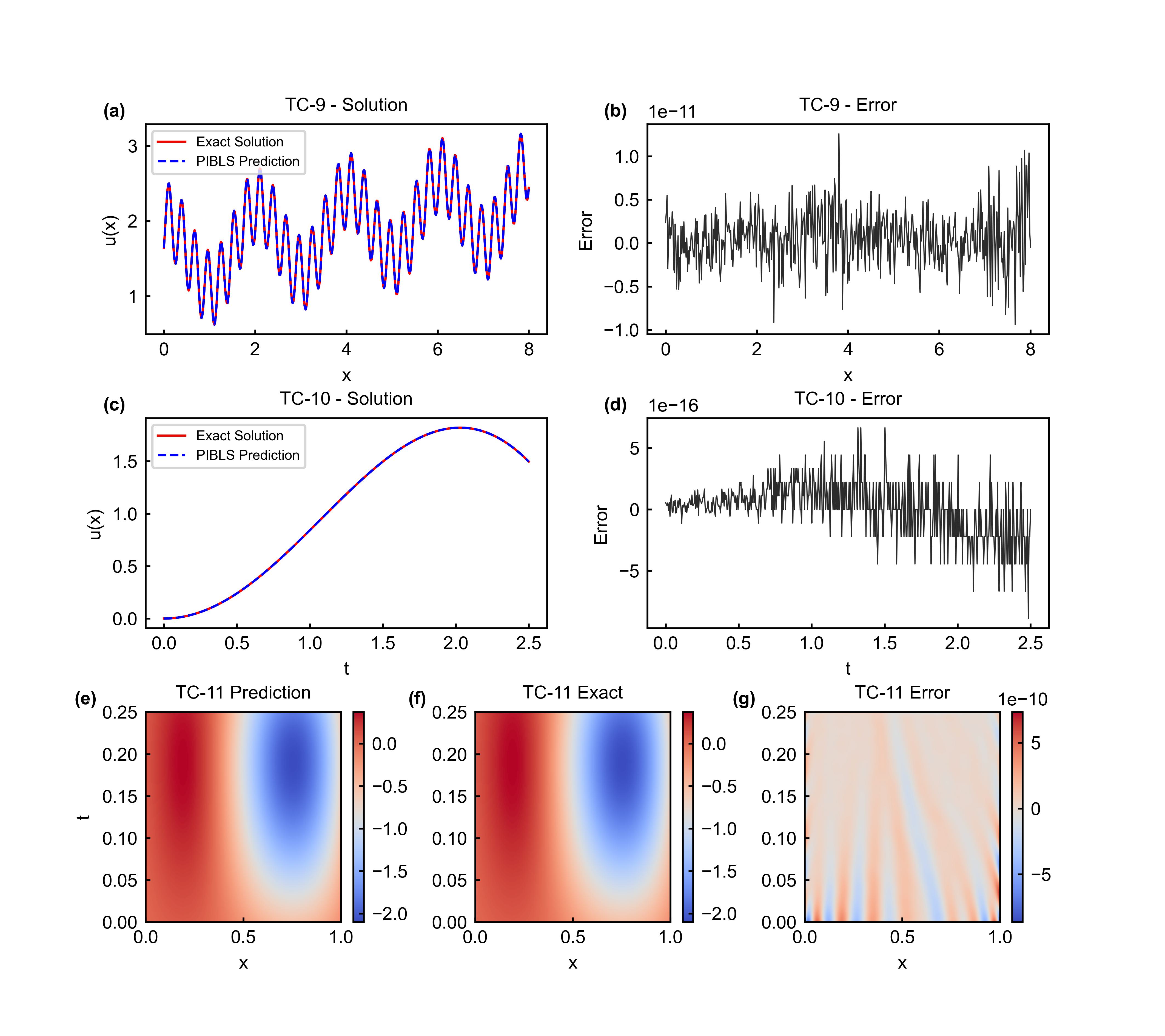}
    \caption{\textbf{Solutions and errors for nonlinear PDEs.} \textbf{a}, \textbf{b} Comparison of PIBLS prediction (blue dashed line) and exact solution (red solid line) (\textbf{a}) and corresponding point-wise error (\textbf{b}) for TC-9. \textbf{c}, \textbf{d} Solution comparison (\textbf{c}) and point-wise error (\textbf{d}) for TC-10. \textbf{e}-\textbf{g} PIBLS predicted solution surface (\textbf{e}), exact solution surface (\textbf{f}), and point-wise error contour (\textbf{g}) for TC-11.}
    \label{fig:nonlinear_figresults}
\end{figure}

We considered three challenging nonlinear benchmarks: the steady-state nonlinear Helmholtz equation (TC-9), the nonlinear spring oscillator equation (TC-10), and the time-dependent viscous Burger's equation (TC-11). For these nonlinear test cases, we utilized PIBLS architectures with total parameters ranging from 415 to 1400, whose specific configurations for each problem are detailed in Table~\ref{tab:benchmark_setup}.

Fig.~\ref{fig:nonlinear_figresults} illustrates the comparative results between PIBLS predictions and exact solutions, demonstrating excellent agreement across all three test cases. The quantitative comparisons in Table~\ref{tab:nonlinear_tableresults} highlight the framework's performance. For TC-9, PIBLS achieved an $L_2$ error of $4.69 \times 10^{-12}$. This result was two orders of magnitude lower than the locELM baseline ($2.34 \times 10^{-10}$), while PINN baselines struggled to converge. This advantage was even more pronounced in TC-10, where PIBLS achieved near-machine accuracy with an $L_2$ error of $2.40 \times 10^{-16}$. This performance surpassed all baselines, including locELM ($1.12 \times 10^{-11}$) and FEM ($3.02 \times 10^{-10}$). Most notably, for the challenging TC-11, PIBLS maintained high accuracy with $L_2$ error of $9.88 \times 10^{-11}$ where PINN models failed completely (errors $>4$) and locELM lagged by two orders of magnitude. This significant accuracy gap highlights the superiority of the PIBLS architecture.

Computationally, PIBLS consistently outperformed the PINN baselines, reducing solution times by orders of magnitude. Compared to locELM, the framework demonstrated superior efficiency on complex dynamics. Notably, for TC-11, PIBLS achieved convergence in just 0.56~s, outpacing locELM (27.6~s) by a factor of approximately 50. While locELM retained a speed advantage for TC-10, PIBLS achieved accuracy improvements of several orders of magnitude while maintaining acceptable computational costs. Crucially, the integration of the expressive BLS basis with the Enhanced NLSQ-perturb solver empowers the framework to resolve challenging nonlinearities with an accuracy that significantly surpasses all other methods in this study.

\begin{figure}[t]
   \centering
   \includegraphics[width=0.95\textwidth]{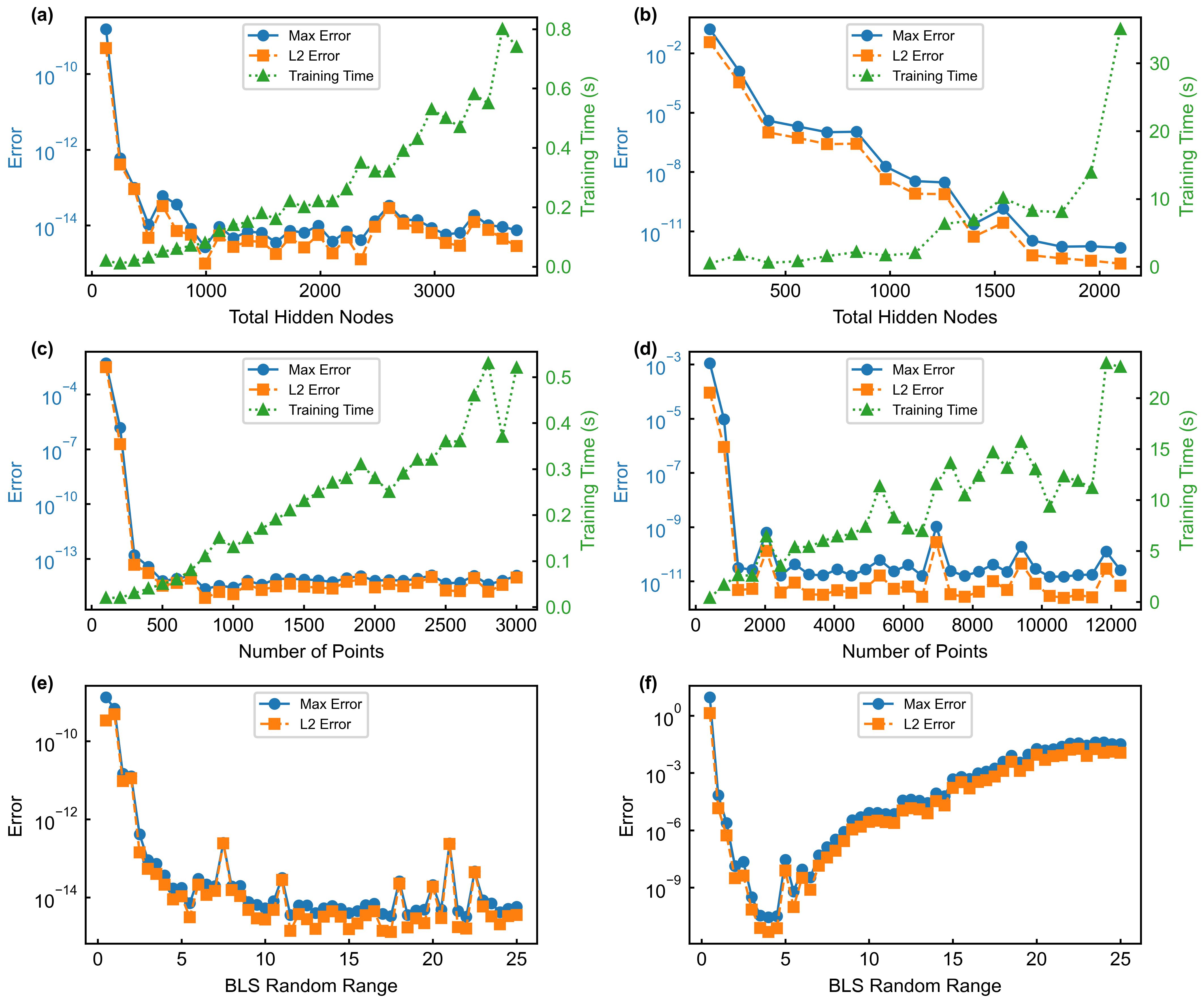} 
   \caption{\textbf{Ablation study of PIBLS hyperparameters.} The left column (\textbf{a}, \textbf{c}, \textbf{e}) corresponds to the linear problem (TC-1), and the right column (\textbf{b}, \textbf{d}, \textbf{f}) corresponds to the nonlinear problem (TC-9). \textbf{a}, \textbf{b} Impact of the total number of hidden nodes on error metrics (left y-axis) and Training Time (green line, right y-axis). \textbf{c}, \textbf{d} Impact of the number of training points on error metrics (left y-axis) and Training Time (green line, right y-axis). \textbf{e}, \textbf{f} Impact of the BLS random initialization range on Maximum Error (blue circles) and $L_2$ Error (orange squares).}
   \label{fig:ablation_study}
\end{figure}
\subsection{Parameter sensitivity analysis}
To assess the robustness of the PIBLS framework and provide practical guidance for its implementation, we investigated its sensitivity to three key hyperparameters: the number of hidden nodes $N_{\text{nodes}}$ (where $N_{\text{nodes}} = n \cdot k + m \cdot q$), the number of training points $N$ (where $N = N_f + N_\text{bc} + N_\text{ic}$), and the random weight initialization range ($R_m$). We conducted this analysis using two representative benchmarks, the linear TC-1 and the nonlinear TC-9. The results are presented in Fig.~\ref{fig:ablation_study}.

\subsubsection{Impact of network architecture}
We first examined the impact of the total number of hidden nodes $N_{\text{nodes}}$ on solution accuracy and training time. Fig.~\ref{fig:ablation_study}a shows that for the linear TC-1, the solution error declines rapidly as the network widens, eventually stabilizing at a plateau below $10^{-14}$ when the number of nodes reaches approximately 1000. As illustrated in Fig.~\ref{fig:ablation_study}b, the nonlinear TC-9 exhibits a similar trend, although it requires a wider network, with the $L_2$ error stabilizing below $10^{-11}$ after approximately 1500 nodes. As indicated by the green curves, increasing the node count beyond these respective saturation points yields only marginal accuracy gains while incurring a near-linear increase in computational cost. These results suggest that selecting a moderately wide network strikes an optimal balance between minimizing computational time and maximizing numerical precision.

\subsubsection{Impact of training point number}
Next, we evaluated the influence of the number of training points $N$ on model performance. As illustrated in Figs.~\ref{fig:ablation_study}c-d for TC-1 and TC-9, the solution error decreases significantly as $N$ increases, eventually reaching a saturation point. For TC-1, errors converge to a floor of approximately $10^{-15}$ with roughly 1000 training points. TC-9 requires a denser sampling, with errors stabilizing around $10^{-12}$ once approximately 1500 points are utilized. Crucially, while error reduction saturates beyond these thresholds, the training time exhibits a general upward trend with $N$ in both scenarios. This behavior underscores a trade-off where increasing $N$ further yields negligible precision improvements while incurring a direct penalty in computational cost. Therefore, selecting a moderate number of training points allows the framework to achieve optimal efficiency without compromising solution accuracy.

\subsubsection{Impact of PIBLS randomization parameters}

Finally, we analyzed the sensitivity to the random weight initialization range $R_m$. The results reveal a clear distinction between linear and nonlinear problems. Fig.~\ref{fig:ablation_study}e demonstrates high robustness for TC-1, where the solution error remains consistently low for any $R_m$ exceeding 3.0. Conversely, for TC-9 shown in Fig.~\ref{fig:ablation_study}f, we observe that optimal performance is constrained to a range of moderate $R_m$ values. Aligning with observations in prior randomized neural network studies~\cite{LocalELM}, initialization values that are too large or too small may lead to suboptimal representations. This indicates that while linear problems offer substantial parameter flexibility, selecting an $R_m$ within an effective moderate interval is critical for maximizing performance in nonlinear problems.

\section{Conclusion}
In this article, a novel PIBLS framework was developed to solve linear and nonlinear PDEs by reformulating the solving process as a robust least-squares optimization. By replacing deep, gradient-based architectures with BLS, the framework constructs a global basis for the solution space and decouples feature generation from weight solving. Numerical experiments demonstrate that PIBLS attains machine-level precision at speeds one to three orders of magnitude faster than conventional physics-informed neural networks, and a rigorous mathematical proof confirms its universal approximation capability within Sobolev spaces. we acknowledge that while linear problems are robust, the accuracy in nonlinear cases remains sensitive to the initialization range of the random parameters.

In the future, we hope to explore adaptive strategies for automatically identifying optimal initialization configurations will be developed to minimize user intervention and ensure maximum accuracy in nonlinear dynamics.

\section*{Acknowledgments}

This study was sponsored by National Natural Science Foundation of China No. 62572199, 92467109, 62476101, U21A20478, and in part by the Major  Key Project of PCL (Grant No. PCL2025A11 and No. PCL2025A13),  in part by the National Key R\&D Program of China 2023YFA1011601, the Guangdong Basic and Applied Basic Research Foundation under Grant 2024A1515140137.
\bibliographystyle{unsrt}  
\bibliography{references}

\end{document}


\maketitle

\begin{abstract}
Partial differential equations (PDEs) play a central role in modeling complex physical, biological, and engineering systems.  Recent advances in Physics-Informed Neural Networks (PINNs) have provided a flexible approach for solving PDEs by embedding PDE constraints into deep learning frameworks, but suffer from slow convergence and unstable gradient-based training. We propose the Physics-Informed Broad Learning System (PIBLS), a backpropagation-free framework that reformulates PDE solving as a least-squares optimization within a Broad Learning System (BLS) architecture. A mathematical proof further establishes the universal approximation property of PIBLS for PDEs. Experiments on linear and nonlinear PDEs demonstrate that PIBLS is one to three orders of magnitude faster than conventional PINNs while achieving significantly higher solution accuracy. Furthermore, PIBLS effectively models real-world biological dynamics such as cell migration, showcasing strong generalization from limited data. PIBLS provides a powerful, fast, and practical alternative to both deep learning solvers and traditional numerical methods.

\end{abstract}

\section{Introduction}

\section*{Acknowledgments}
This study was sponsored by National Natural Science Foundation of China No. 92467109 and in part by National Natural Science Foundation of China 62476101, in part by the National Key R\&D Program of China 2023YFA1011601, and in part by the Major Key Project of PCL, China under Grant PCL2025A11. 

\bibliographystyle{unsrt}  
\bibliography{references}